\colorlet{shadecolor}{black!15}
\newtheorem{theorem}{Theorem}
\newtheorem{definition}{Definition}
\newtheorem{lemma}{Lemma}
\newtheorem{proposition}{Proposition}
\newcommand{\card}[1]{{\left|#1\right|}}
\newcommand{\ssq}{S$^2$}
\newtheorem{claim}{Claim}
\title{\ssq: An Efficient Graph Based Active Learning Algorithm with Application to Nonparametric Classification}
\author{Gautam Dasarathy%
\footnote{Machine Learning Department, Carnegie Mellon University; gautamd@cs.cmu.edu}
 \and
Robert Nowak%
\footnote{Department of Electrical Engineering, University of Wisconsin, Madison; {rdnowak@wisc.edu}}
\and
Xiaojin Zhu%
\footnote{Department of Computer Sciences, University of Wisconsin, Madison;  jerryzhu@cs.wisc.edu} 
}
\date{}
\begin{document}

\maketitle

\begin{abstract}
This paper investigates the problem of active learning for binary label prediction on a graph.
We introduce a simple and label-efficient algorithm called \ssq for this task. At each step, \ssq selects the vertex to be labeled based on the structure of the graph and all previously gathered labels. Specifically, \ssq queries for the label of the vertex that bisects the {\em shortest shortest} path between any pair of oppositely labeled vertices. We present a theoretical estimate of the number of queries \ssq needs in terms of a novel  parametrization of the complexity of binary functions on graphs. We also present experimental results demonstrating the performance of \ssq on both real and synthetic data. While other graph-based active learning algorithms have shown promise in practice, our algorithm is the first with both good performance and theoretical guarantees. Finally, we demonstrate the implications of the \ssq algorithm to the theory of nonparametric active learning. In particular, we show that \ssq achieves near minimax optimal excess risk for an important class of nonparametric classification problems.\\
{\bf Keywords: }active learning on graphs, query complexity of finding a cut, nonparametric classification
\end{abstract}

\section{Introduction}
\label{sec.intro}
This paper studies the problem of binary label prediction on a graph. We suppose that we are given a graph over a set of vertices, where each vertex is associated with an initially unknown binary label. For instance, the vertices could represent objects from two classes, and the graph could represent the structure of similarity of these objects. The unknown labels then indicate the class that each object belongs to. The goal of the general problem of binary label prediction on a graph is to predict the label of all the vertices given (possibly noisy) labels for a subset of the vertices. Obtaining this initial set of labels could be costly  as it may involve consulting human experts or expensive experiments. It is therefore of considerable interest to minimize the number of vertices whose labels need to be revealed before the algorithm can predict the remaining labels accurately. In this paper, we are especially interested in designing an \emph{active} algorithm for addressing this problem, that is, an algorithm that sequentially and automatically selects the vertices to be labeled based on both the structure of the graph and the previously gathered labels. We will now highlight the main contributions of the paper: 
\begin{itemize}
\item {\bf A new active learning algorithm which we call ${\rm S}^2$.} In essence, \ssq, which stands for \emph{shortest shortest path} operates as follows. Given a graph and a subset of vertices that have been labeled, \ssq  picks the mid-point of the path of least length among all the shortest paths connecting oppositely labeled vertices.  As we will demonstrate in Sec~\ref{sec.analysis}, \ssq automatically adapts itself to a natural notion of the complexity of the cut-set of the labeled graph.
\item {\bf A novel complexity measure}. While prior work on graph label prediction  has focused only on the cut-size (i.e., the number of edges that connect oppositely labeled nodes)  of the labeling, our refined complexity measure (cf. Section~\ref{sec.complexity}) quantifies the difficulty of learning the cut-set. Roughly speaking, it measures how clustered the cut-set is; the more clustered the cut-set, the easier it is to find. This is analogous to the fact that the difficulty of classification problems in standard settings depends both on the \emph{size} and the \emph{complexity} of the Bayes decision boundary. 
\item {\bf A practical algorithm for non-parametric active learning.} Significant progress has been made in terms of characterizing the theoretical advantages of active learning in nonparametric settings (e.g., \cite{castro08,wang11,Minsker12,hanneke11,koltchinskii10}), but most methods are not easy to apply in practice. On the other hand, the algorithm proposed in \cite{zhu03combining}, for example, offers a flexible approach to nonparametric active learning that appears to provide good results in practice. It however does not come with theoretical performance guarantees.  A contribution of our paper is to fill the gap between the practical method of \cite{zhu03combining} and the theoretical work above. We show that \ssq achieves the minimax rate of convergence for classification problems with decision boundaries in the so-called {\em box-counting} class (see \cite{castro08}).  To the best of our knowledge this is the first practical algorithm that is minimax-optimal (up to logarithmic factors) for this class of problems.
\end{itemize}

\subsection{Related Work}
Label prediction on the vertices of a graph is an important and challenging problem. Many practical algorithms have been proposed for this (e.g.,~\cite{dasgupta,zhuGhahLaff03,blumChawla01,blumLafferty04}),  
with numerous applications such as information retrieval \citep{joachim03transductive} and learning intracellular pathways in biological networks \citep{chasmanCraven}, among others. 
Theoretically, however, a complete understanding of this problem has remained elusive. In the active learning version of the problem there even appears to be contradictory results with some supporting the benefit of active learning~\citep{dasgupta,afshani07} while others being pessimistic~\citep{cesaBianchi10}. In this paper, we propose a  new and simple active learning algorithm, \ssq,  and examine its performance. Our theoretical analysis of \ssq, which utilizes a novel parametrization of the complexity of labeling functions with respect to graphs, clearly shows the benefit of adaptively choosing queries on a broad class of problems. The authors in \cite{cesaBianchi10}  remark that ``adaptive vertex selection based on labels'' is not helpful in \emph{worst-case adversarial settings}.  Our results do not contradict this since we are concerned with more realistic and non-adversarial labeling models (which may be deterministic or random).

Adaptive learning algorithms for the label prediction on graphs follow one of two approaches. The algorithm can either (i) choose {all its queries upfront}
according to a fixed design based on the structure of the graph without observing any labels (e.g., in \cite{GuHan12bound, cesaBianchi10}), or 
(ii) pick vertices to query in a sequential manner based on the structure of the graph {\em and previous collected labels} (e.g., \cite{zhu03combining,afshani07})\footnote{This distinction is important since the term ``active learning'' has been used in \cite{cesaBianchi10} and \cite{GuHan12bound} in the sense of (i)}.
The former is closely related to experimental design in statistics, while the latter is adaptive and more akin to active learning; this is the focus of our paper. 

Another important component to this problem is using the graph structure and the labels at a subset of the vertices to predict the (unknown) labels at all other vertices.  This arises in both passive and active learning algorithms. This is a well-studied problem in semi-supervised learning and there exist many good methods \citep{zhuGhahLaff03,zhou04llgc,GuHan12bound,blumChawla01}.  The focus of our paper is the adaptive selection of vertices to label.  Given the subset of labeled vertices it generates, any of the existing methods mentioned above can be used to predict the rest.

The main theoretical results of our paper characterize the sample complexity of learning the cut-set of edges that partition the graph into disjoint components corresponding to the correct underlying labeling of the graph. 
The work most closely related to the focus of this paper is \cite{afshani07}, which studies this problem from the perspective of query complexity \citep{angluin2004queries}. Our results improve upon those in \cite{afshani07}.  The algorithm we propose is able to take advantage of the fact that the cut-set edges are often close together in the graph, and this can greatly reduce the sample complexity in theory and practice.

Finally, our theoretical analysis of the performance of \ssq quantifies the number of labels required to learn the cut-set and hence the correct labeling of the entire graph.  It does not quantify the number of labels needed to achieve achieve a desired {\em nonzero} (Hamming) error level. To the best of our knowledge, there are no algorithms for which there is a sound characterization of the Hamming error. Using results from \cite{guillory09}, \cite{cesaBianchi10} takes a step in this direction but their results are valid only for trees.  For more general graphs, such Hamming error guarantees are unknown. Nonetheless, learning the entire cut-set induced by a labeling guarantees a Hamming error of zero, and so the two goals are intimately related.

\section{Preliminaries and Problem Setup}
\label{sec.preliminaries}
We will write $G = (V,E)$ to denote an undirected graph with vertex set $V$ and edge set $E$. Let $f:V\to \left\{ -1,+1 \right\}$ be a binary function on the vertex set. We call $f(v)$ the \emph{label} of $v\in V$. We will further suppose that we only have access to the labeling function $f$ through a (potentially) noisy oracle. That is, fixing $\gamma\in (0,0.5]$, we will assume that for each vertex $v\in V$, the oracle returns a random variable $\hat{f}(v)$ such that $\hat{f}(v)$ equals $-f(v)$ with probability $\gamma$ and equals $f(v)$ with probability $1 - \gamma$, independently of any other query submitted to this oracle. 
We will refer to such an oracle as a $\gamma-$\emph{noisy} oracle. 
The goal then is to design an algorithm which sequentially selects a multiset\footnote{A multiset is a set that may contain repeated elements.} of vertices $L$ and uses the labels $\left\{ \hat{f}(v), v\in L \right\}$ to accurately learn the true labeling function $f$. Since the algorithm we design will assume nothing about the labeling, it  will be equipped to handle even an adversarial labeling of the graph. Towards this end, if we let $C$ denote the \emph{cut-set} of the labeled graph, i.e., $ C\triangleq \left\{ \left\{ x,y \right\}\in E: f(x)\neq f(y) \right\}$ and let $\partial C$ denote the \emph{boundary} of the cut-set, i.e., $\partial C = \left\{ x\in V: \exists e\in C \mbox{ with }x\in e \right\}$, then our goal will actually be to identify $\partial C$. 

Of course, it is of interest to make $L$ as small as possible. Given $\epsilon\in (0,1)$, the number of vertices that an algorithm requires the oracle to label, so that with probability at least $1 - \epsilon$ the algorithm correctly learns $f$ will be referred to as its \emph{$\epsilon-$query complexity}. We will now show that given an algorithm that performs well with a noiseless oracle, one can design an algorithm that performs well on a $\gamma-$noisy oracle. 

\begin{proposition}
\label{prop.noise}
Suppose $\mathcal{A}$ is an algorithm that has access to $f$ through a noiseless oracle, and suppose that it has a $\epsilon-$query complexity $q$, then for each $\gamma\in (0,0.5)$, there exists an algorithm $\tilde{\mathcal{A}}$ which, using a $\gamma-$noisy  oracle achieves a $2 \epsilon-$query complexity given by $q\times \left\lceil \frac{1}{2(0.5 -\gamma)^2}\log\left( \frac{n}{\epsilon} \right) \right\rceil$.

\end{proposition}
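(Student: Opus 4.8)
The plan is a standard ``majority-vote boosting'' reduction. Given the noiseless-oracle algorithm $\mathcal{A}$, the algorithm $\tilde{\mathcal{A}}$ simulates $\mathcal{A}$ verbatim, except that each time $\mathcal{A}$ requests the label of a vertex $v$, $\tilde{\mathcal{A}}$ instead queries the $\gamma$-noisy oracle at $v$ a total of $m \triangleq \left\lceil \frac{1}{2(0.5-\gamma)^2}\log\left(\frac{n}{\epsilon}\right) \right\rceil$ times and feeds back to $\mathcal{A}$ the majority vote of the $m$ returned labels (breaking ties arbitrarily). If $\mathcal{A}$ ever re-requests the label of a vertex it has queried before, $\tilde{\mathcal{A}}$ returns the previously computed majority value, so the set of \emph{distinct} vertices actually touched has size at most $q$, and in particular at most $n$. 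Since $\mathcal{A}$ makes at most $q$ distinct queries and each is realized by $m$ noisy queries, the total number of labels $\tilde{\mathcal{A}}$ requests is at most $qm$, matching the claimed bound.

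First I would analyze a single vertex $v$. Writing $X_1,\dots,X_m$ for the indicators that the respective noisy responses at $v$ are flipped, the independence assumption on the oracle makes the $X_i$ i.i.d.\ $\mathrm{Bernoulli}(\gamma)$, and the majority vote is incorrect only if $\frac1m\sum_{i} X_i \ge \frac12 = \gamma + (0.5-\gamma)$. Hoeffding's inequality bounds this probability by $\exp\!\left(-2m(0.5-\gamma)^2\right)$, and the choice of $m$ makes this at most $\epsilon/n$. A union bound over the at most $n$ distinct vertices queried by $\tilde{\mathcal{A}}$ then shows that, with probability at least $1-\epsilon$, every majority vote returned to $\mathcal{A}$ equals the true label; call this event $\mathcal{E}$.

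Next I would observe that on $\mathcal{E}$ the entire transcript seen by the simulated copy of $\mathcal{A}$ is indistinguishable from a transcript it could have produced while interacting with a noiseless oracle for $f$; hence by hypothesis $\mathcal{A}$—and therefore $\tilde{\mathcal{A}}$—recovers $f$ correctly with conditional probability at least $1-\epsilon$. Combining the two bad events,
\[
\Pr[\tilde{\mathcal{A}}\text{ errs}] \;\le\; \Pr[\mathcal{E}^c] + \Pr[\tilde{\mathcal{A}}\text{ errs}\mid \mathcal{E}] \;\le\; \epsilon + \epsilon \;=\; 2\epsilon ,
\]
so $\tilde{\mathcal{A}}$ has $2\epsilon$-query complexity at most $qm$, as desired.

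The argument is essentially routine, so there is no real obstacle; the only points needing care are (i) invoking independence of distinct oracle calls to license the Hoeffding bound, and (ii) running the union bound over \emph{distinct} queried vertices—at most $n$, which is exactly why $\log(n/\epsilon)$ rather than $\log(q/\epsilon)$ appears in $m$—which in turn is why $\tilde{\mathcal{A}}$ caches and reuses majority votes instead of re-querying a repeated vertex. One may also first reduce to the case where $\mathcal{A}$ never repeats a query (so $q \le n$), which is without loss of generality for a noiseless oracle.
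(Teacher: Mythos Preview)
Your proposal is correct and matches the paper's own proof essentially line for line: both construct $\tilde{\mathcal{A}}$ by having it simulate $\mathcal{A}$ while replacing each label request with $m$ repeated noisy queries and a majority vote, then bound the per-vertex error by a Chernoff/Hoeffding tail $e^{-2m(0.5-\gamma)^2}\le \epsilon/n$ and union-bound over the at most $n$ vertices. Your extra remarks about caching repeated queries and reducing to $q\le n$ are harmless refinements that the paper leaves implicit.
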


The main idea behind this proposition is that one can build a noise-tolerant version of $\mathcal{A}$ by repeating each query that $\mathcal{A}$ requires many times, and using the majority vote. The proof of Proposition~\ref{prop.noise} is then a straightforward application of Chernoff bounds and we defer it to Appendix~\ref{sec.proofProp1}. 

Therefore, to keep our presentation simple, we will assume in the sequel that our oracle is noiseless. A more nuanced treatment of noisy oracles is an interesting avenue for future work. 

It should also be noted here that the results in this paper can be extended to the multi-class setting, where $f: V\to \left\{ 1,2,\ldots, k \right\}$ by the standard ``one-vs-rest'' heuristic (see e.g., \citet{bishop2006pattern}). However, a thorough investigation of the multiclass classification on graphs is an interesting avenue for future work.

\section{The \ssq algorithm}
\label{sec.algorithm}
The name \ssq signifies the fact that the algorithm bisects the  \emph{\underline{s}hortest \underline{s}hortest-path} connecting oppositely labeled vertices in the graph. As we will see, this allows the algorithm to automatically take advantage of the clusteredness of the cut set. Therefore, \ssq   ``\emph{unzips}'' through a tightly clustered cut set and locates it rapidly (see Figure~1). 
\begin{figure}[t]
\label{fig.S2example}
  \subfigure[Random sampling ends. One shortest shortest path is shown with  thickened edges.]{\includegraphics[width=0.24\textwidth]{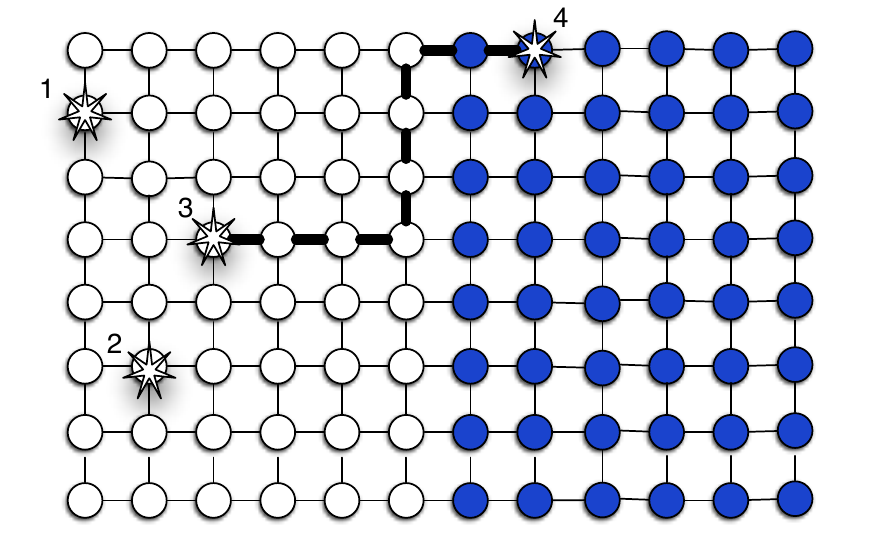}}\hfill
  \subfigure[This path bisected and \ssq finds one cut edge.]{\includegraphics[width=0.24\textwidth]{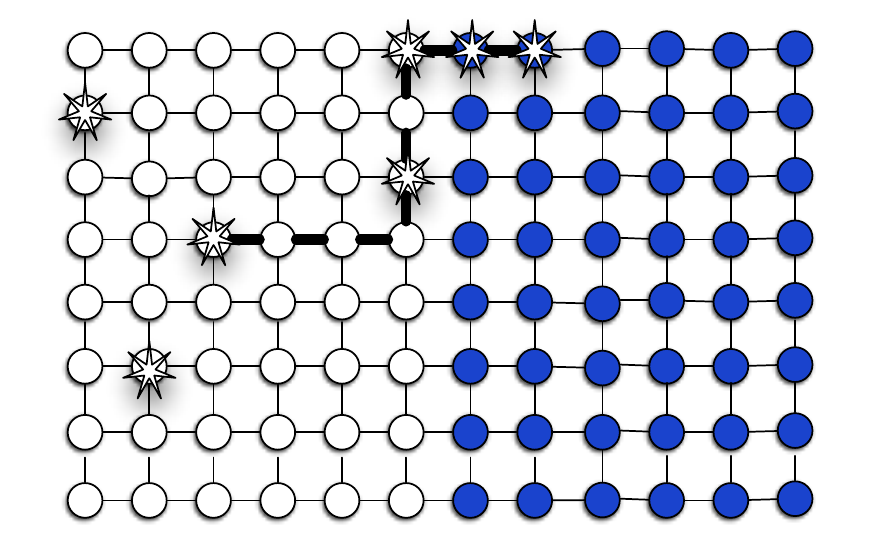}}
\hfill  \subfigure[Next shortest-shortest path is bisected.]{\includegraphics[width=0.24\textwidth]{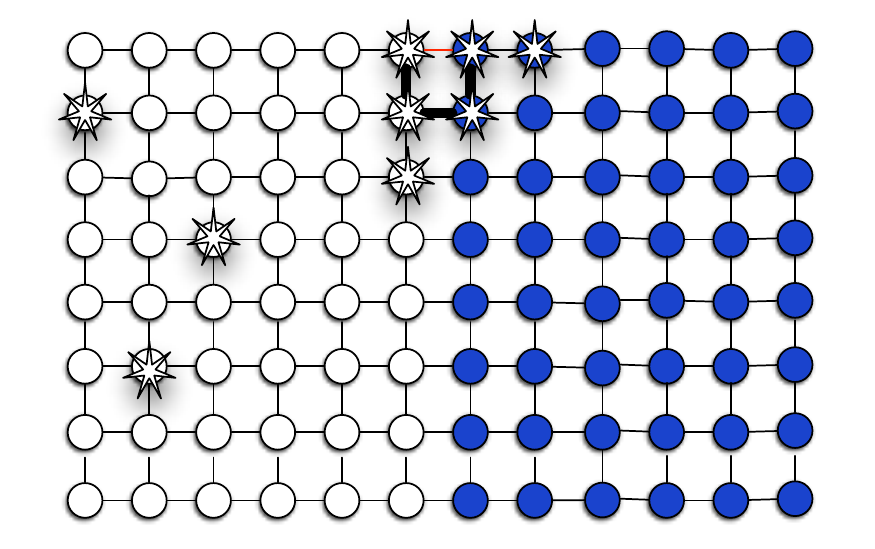}}\hfill
  \subfigure[This continues till \ssq unzips through the cut boundary.]{\includegraphics[scale = 0.42
 ]{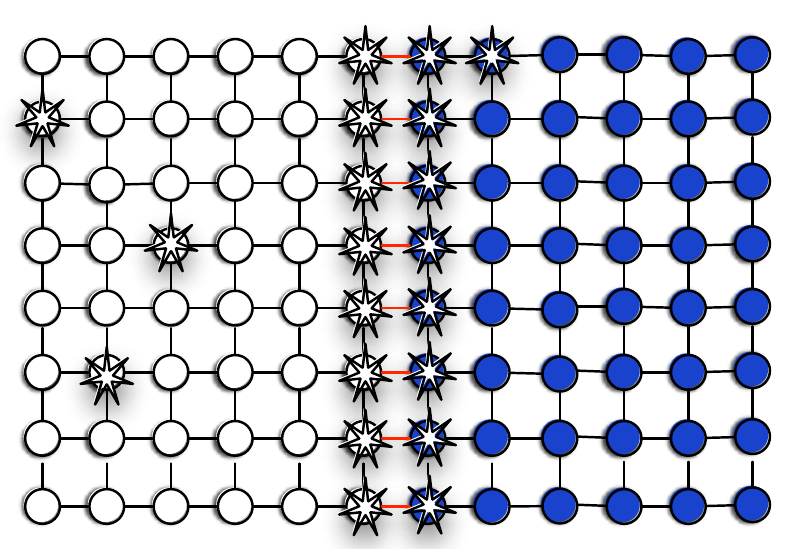}}
\caption{A sample run of the \ssq algorithm on a grid graph. The shaded and unshaded vertices represent two different classes (say $+1$ and $-1$ respectively). See text for explanation.}
\end{figure}

The algorithm works by first constructing a label set $L\subset V$ sequentially and adaptively and then using the labels in $L$ to predict the labels of the vertices in $V\setminus L$. It accepts as input a natural number \textsc{budget}, which is the query budget that we are given to complete the learning task. In our theoretical analysis, we will show how big this budget needs to be in order to perform well on a wide variety of problems. Of course this budget specification merely reflects the completely agnostic nature of the algorithm; we subsequently show (see Section~\ref{sec.stoppingCriterion}) how one might factor in expert knowledge about the problem to create more useful stopping criteria.
\renewcommand{\algorithmicrepeat}{\textbf{do}}
\renewcommand{\algorithmicuntil}{\textbf{while}}
\renewcommand{\algorithmicrequire}{\textbf{Input}}

\begin{algorithm}[t!]
\centering
\captionof{algorithm}{\ssq: Shortest Shortest Path}
\label{alg.main}
\begin{algorithmic}[1]
\REQUIRE{Graph ${G}=(V,E)$, {\textsc{budget}} $ \leq n$}
\STATE $L\leftarrow\emptyset$
\WHILE{$1$}
\STATE $x\leftarrow$ Randomly chosen unlabeled vertex
\REPEAT
\STATE Add $(x,f(x))$ to $L$
\STATE Remove from $G$ all edges whose two ends have different labels. 
\IF{$\card{L}=\mbox{\textsc{budget}}$}
\STATE {{\bf Return} \textsc{labelCompletion}$(G,L)$}
\ENDIF
\UNTIL{{$x\leftarrow\mbox{\textsc{mssp}} (G,L)$ exists}}
\ENDWHILE
\end{algorithmic}
\end{algorithm}

\floatname{algorithm}{Sub-routine}

\begin{algorithm}[t!]
\centering
\captionof{algorithm}{\textsc{mssp} \textcolor{white}{$S^2 g$}}
\label{alg.mssp}

\begin{algorithmic}[1]

\REQUIRE{Graph ${G}=(V,E)$, $L\subseteq V$}

\FOR{each $v_i,v_j\in L$ such that $f(v_i)\neq f(v_j)$}
\STATE $P_{ij}\leftarrow$ shortest path between $v_i$ and $v_j$ in ${G}$
\STATE $\ell_{ij}\leftarrow$ length of $P_{ij}$ ($\infty$ if no path exists)
\ENDFOR

\STATE $(i^\ast,j^\ast)\leftarrow\arg\min_{v_i,v_j\in L: f(v_i)\neq f(v_j)}\ell_{ij}$
\IF{$(i^\ast,j^\ast)$ exists}
\STATE {\bf Return} mid-point of $P_{i^\ast j^\ast}$ (break ties arbitrarily).
\ELSE
\STATE{\bf Return} $\emptyset$
\ENDIF
\end{algorithmic}
\end{algorithm}

\ssq (Algorithm~\ref{alg.main}) accepts a graph ${G}$ and a natural number \textsc{budget}. Step 3 performs a random query. In step 6, the obvious cut edges are identified and removed from the graph. This ensures that once a cut edge is found, we do not expend more queries on it.  In step 7, the algorithm ensures that it has enough budget left to proceed. If not, it stops querying for labels and completes the labeling using the subroutine \textsc{labelCompletion}. In step 10, the mid-point of the shortest shortest path is found using the \textsc{mssp} subroutine (Subroutine~\ref{alg.mssp}) and the algorithm proceeds till there are no more mid-points to find. Then, the algorithm performs another random query and the above procedure is repeated. As discussed in Proposition~\ref{prop.noise}, if in Step 5, we compute $f(x)$ as the majority label among $\mathcal{O}\left(\log(n/\epsilon)\right)$ repeated queries about the label of $x$, then we get a \emph{noise tolerant version} of \ssq. 

We will now describe the sub-routines used by \ssq. Each of these accepts a graph $G$ and a set $L$ of vertices. \textsc{labelCompletion}$(G,L)$ is any off-the-shelf procedure that can complete the labeling of a graph from a subset of labels. This could, for instance, be the graph min-cut procedure~\cite{blumChawla01} or harmonic label propagation~\cite{zhuGhahLaff03}. 
Since, we endeavor to learn the entire cut boundary, we only include this sub-routine for the sake of completeness so that the algorithm can run with any given budget.
Our theoretical results do not depend on this subroutine.
\textsc{mssp}$(G,L)$ is Subroutine~\ref{alg.mssp} and finds the midpoint on the shortest among all the shortest-paths that connect oppositely labeled vertices in $L$.  If none exist, it returns $\emptyset$.

The main idea underlying the \ssq algorithm is the fact that learning the labeling function $f$ amounts to locating all the \emph{cut-edges} in the labeled graph. Conceptually, the algorithm can be thought of as operating in two phases: random sampling and aggressive search. In the random sampling phase, the algorithm queries randomly chosen unlabeled vertices until it finds two vertices with opposite labels. Then our algorithm enters the aggressive search phase. It picks the shortest path between these two points and bisects it. What our algorithm does next sets it apart from prior work such as \citet{afshani07}. It does not run each binary search to the end, but merely keeps bisecting the shortest among all the shortest paths that  connect oppositely labeled vertices observed so far. This endows the algorithm with the ability to ``unzip'' cut  boundaries. Consider a sample run shown in Figure~1. The random sampling phase first picks a set of random vertices till an oppositely labeled pair is observed as in Figure~1(a). The shortest shortest path connecting oppositely labeled nodes is shown here as a thick sequence of edges. Figure~1(b) shows that \ssq now bisects  shortest shortest paths and subsequently finds a cut-edge. The bisection of the next two shortest shortest paths is shows in Figure~1(c) and we see the boundary unzipping feature of the algorithm emerges. Figure~1(d) finally shows the situation after the completion of the algorithm. Notice that an extremely small number of queries are used before \ssq completely discovers the cut boundary.

\subsection{Stopping Criterion}
\label{sec.stoppingCriterion}
Notice \ssq stops only if the budget is exhausted. This stopping criterion was chosen for two main reasons. Firstly, this keeps the presentation simple and reflects the fact that \ssq   \emph{assumes nothing} about the underlying problem. In practice, extra information about the problem can be easily incorporated. For instance, suppose one knows a bound on the cut-set or on the size of similarly labeled connected components, then such criteria can be used in Step 7. Similarly, one can hold out a random subset of observed labels and stop upon achieving low prediction error on this subset. Secondly, in our theoretical analysis, we show that as long as the budget is large enough, then \ssq recovers the cut boundary exactly.
The larger the budget is, the more complex the graphs and labelings \ssq can handle. Therefore, our result can be interpreted as a quantification of the complexity of a natural class of graphs.
In that sense \ssq is not only an algorithm but a tool for exposing the theoretical challenges of label prediction on a graph.

\section{Analysis}
\label{sec.analysis}

Let $C$ and $\partial C$ be as defined in Section~\ref{sec.preliminaries}. 
 As we observed earlier, the problem of learning the labeling function $f$ is equivalent to the problem of locating all the cut edges in the graph. Clearly, if $f$ could be arbitrary, the task of learning $f$ given its value on a subset of its domain is ill-posed. However, we can make this problem interesting by constraining the class of functions that $f$ is allowed to be in. Towards this end, in the next section we will discuss the complexity of a labeling function with respect to the underlying graph. 

\subsection{Complexity of the Labeling Function with respect to the Graph}
\label{sec.complexity}
We will begin by making a simple observation. Given a graph $G$ and a labeling function $f$, notice that $f$ partitions the vertices of the graph into a collection of connected components with identically labeled vertices. We will denote these connected components as $V_1, V_2,\ldots, V_k$, where $k$ represents the number of connected components. 

Then, the above partitioning of the vertex set induces a natural partitioning of the cut set $C = \bigsqcup_{1\leq r<s\leq k}C_{rs}$, where $C_{rs} = \{{x,y}\in C: x\in V_r, y\in V_s\}.$ That is, $C_{rs}$ contains the cut edges whose boundary points are in the components $V_r$ and $V_s$\footnote{$C_{rs}$ could of course be empty for certain pairs $r,s$.}. We denote the number of non-empty subsets $C_{rs}$ by $m$, and we call each non-empty $C_{rs}$ a \emph{cut component}. See Figure~\ref{fig.cutComponent}(a) for an illustration. It shows a graph, its  corresponding labeling (denoted by darker and lighter colored vertices) and the cut set (thickened lines). It also shows the induced vertex components $V_1,V_2,V_3$ and the corresponding cut components $C_{12}$ and $C_{23}$. We will especially be interested in how clustered a particular cut component is. For instance, compare the cut component $C_{12}$ between Figures~\ref{fig.cutComponent}(a) and \ref{fig.cutComponent}(b). Intuitively, it is clear that the former is more clustered than the latter. As one might expect, we show that it is easier to locate well-clustered cut components.

We will now introduce three parameters which, we argue, naturally govern the complexity of a particular problem instance. Our main results will be expressed in terms of these parameters.

{\bf 1. Boundary Size.} The first and the most natural measure of complexity we consider is the size of the boundary of the cut set $\left|\partial C\right|$. It is not hard to see that a problem instance is hard if the boundary size induced by the labeling is large. In fact, $\left|\partial C\right|$ is trivially a lower bound on the total number of queries needed to learn the location of $C$. Conversely, if it is the case that well-connected vertices predominantly have the same label, then $\card{\partial C}$ will most likely be small. Theorem~\ref{thm.main} will show that the number of queries needed by the \ssq algorithm scales approximately linearly with $\left|\partial C\right|$.

{\bf 2. Balancedness.} The next notion of label complexity we consider is the \emph{balancedness} of the labeling. As we discussed above, the labeling function induces a partition on the vertex set $V_1,V_2,\ldots,V_k$. We will define the \emph{balancedness of $f$ with respect to ${G}$} as $\beta\triangleq\min_{1\leq i\leq k}\frac{\card{V_i}}{n}$.

It is not hard to see why balancedness is a {natural way of quantifying problem  complexity}. If there is a very small component, then without prior knowledge, it will be unlikely that we see labeled vertices from this component. We would then have no reason to assign labels to its vertices that are different from its neighbors. Therefore, as we might expect, the larger the value of $\beta$, the easier it is to find a particular labeling (see Lemma~\ref{lem.randomSampling} for more on this).

{\bf {3. Clustering of Cut Components.}} We finally introduce a notion of complexity of the labeling function which, to the best of our knowledge, is novel. As we show, this complexity parameter is key to developing an efficient active learning algorithm for general graphs. 

Let $d_{\rm sp}^G : V\times V \to \mathbb{N}\cup\left\{ 0,\infty \right\}$ denote the shortest path metric with respect to the graph $G$, i.e., $d_{\rm sp}^G(x,y)$ is the the length of the shortest path connecting $x$ and $y$ in $G$ with the convention that the distance is $\infty$ if $x$ and $y$ are disconnected. Using this, we will define a metric $\delta : C\times C \to \mathbb{N}\cup\left\{ 0,\infty \right\}$ on the cut-edges as follows. Let $e_1 = \left\{ x_1,y_1 \right\}, e_2 = \left\{ x_2,y_2 \right\} \in C$ be such that $f(x_1) = f(x_2) = +1$ and $f(y_1) = f(y_2) = -1$. Then $\delta(e_1,e_2) = d_{\rm sp}^{G - C}(x_1,x_2) + d_{\rm sp}^{G-C}(y_1,y_2) + 1$, where $G-C$ is the graph $G$ with all the cut-edges removed. Notice that $\delta$ is a metric on $C$ and that $\delta(e_1,e_2) < \infty$ if and only if $e_1$ and $e_2$ lie in the same cut-component. 

Imagine a scenario where a cut-component $C_{rs}$ is such that each pair $e,e'\in C_{rs}$ satisfy $\delta(e,e')\leq \kappa$. Now, suppose that the end points of one of the cut-edges $e_1 \in C_{rs}$ has been discovered. By assumption, each remaining cut-edge in $C_{rs}$ lies in a path of length at most $\kappa$ from a pair of oppositely labeled vertices (i.e., the end points of $e_1$). Therefore, if one were to do a bisection search on each path connecting these oppositely labeled vertices, one can find all the cut-edges using no more that $\left| C_{rs} \right| \log \kappa$ queries. If $\kappa$ is small, this quantity could be significantly smaller than $n$ (which is what an exhaustive querying algorithm would do). The reason for the drastically reduced query complexity is the fact that the cut-edges in $C_{rs}$ are \emph{tightly clustered}. A generalization of this observation gives rise to our third notion of complexity. 

Let $H_r = (C,\mathcal{E})$ be a ``meta graph'' whose vertices are the cut-edges of $G$ and $\left\{ e,e' \right\}\in \mathcal{E}$ iff $\delta(e,e')\leq r$, i.e., $H_r$ is the $r$-nearest neighbor graph of the cut-edges, where distances are defined according to $\delta$. From the definition of $\delta$, it is clear that for any $r\in \mathbb{N}$, $H_r$ has at least $m$ connected components. 
\begin{definition}
\label{def.clusteredness}
A cut-set $C$ is said to be $\kappa-${\bf clustered} if $H_\kappa$ has exactly $m$ connected components. These connected components correspond to the cut-components, and we will say that each individual cut-component is also $\kappa$-clustered.  
\end{definition}

Turning back to Figure~2, observe that the cut component $C_{12}$ is $\kappa$-clustered for any $\kappa\geq 5$ in Figure~\ref{fig.cutComponent}(a) and $\kappa-$clustered for any $\kappa \geq 12$ in Figure~\ref{fig.cutComponent}(b). The figure also shows a length 5 (resp. 12) path in Fig~\ref{fig.cutComponent}(a) (resp. Fig~\ref{fig.cutComponent}(b)) that defines the clusteredness.

\begin{figure}[h!]
\begin{center}
\includegraphics[width=0.65\textwidth]{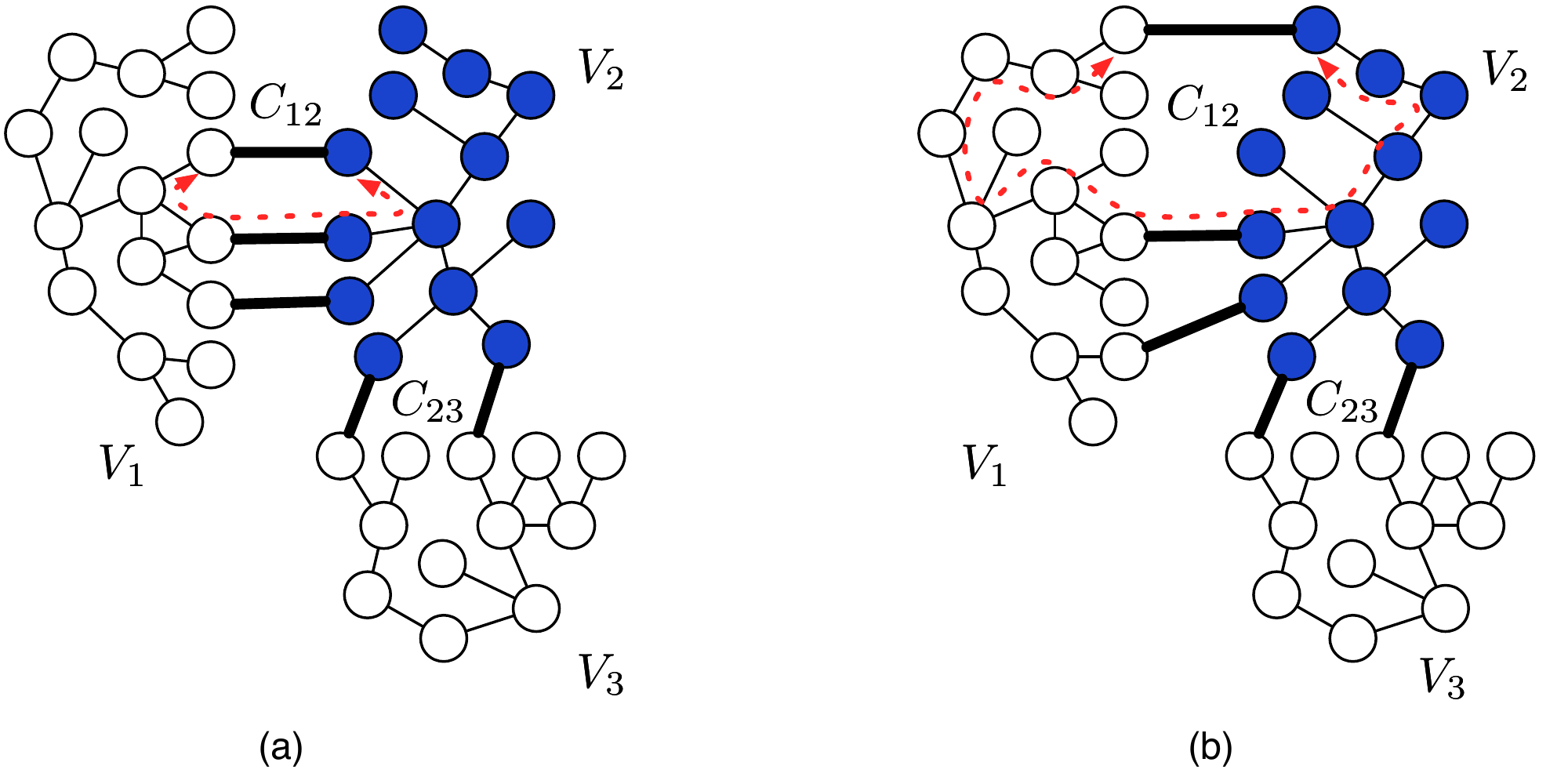}
\caption{Two graphs with the same number of cut-edges (thickened edges) but the cut component $C_{12}$ in (a) is more ``clustered'' than its counterpart in (b). $C_{12}$ is 5-clustered  in (a) and 12-clustered in (b). The corresponding length $\kappa$ paths are shows with a (red) dotted line.}
\label{fig.cutComponent}
\end{center}
\end{figure}

\subsection{The query complexity of \ssq}
In this section, we will prove the following theorem. 
\begin{theorem}
\label{thm.main}
Suppose that a graph ${G}=(V,E)$ and a binary function $f$ are such that the induced cut set $C$ has $m$ cut components that are each $\kappa-$clustered. Then for any $\epsilon>0$, \ssq will recover $C$ with probability at least $1-\epsilon$ if the \textsc{budget} is at least
\begin{equation}
\frac{\log(1/(\beta \epsilon))}{\log\left({1}/{(1-\beta)}\right)}+m\left(\lceil\log_2 n\rceil - \lceil \log_2 \kappa\rceil\right) + \card{\partial C}\left( \lceil \log_2 \kappa\rceil +1 \right)
\label{eq.mainBound}
\end{equation}
\end{theorem}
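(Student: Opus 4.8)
The plan is to decompose the budget into three pieces corresponding to the three terms in \eqref{eq.mainBound}: the cost of the random sampling phase that ``seeds'' every cut component, the cost of the bisection searches that stay within a component, and the cost of the searches that bridge components. I would first isolate the random-sampling contribution. By the balancedness assumption each vertex component $V_i$ has size at least $\beta n$, so a uniformly random unlabeled vertex lies in a given component with probability at least $\beta$; a union/coupon-collector style argument (this is exactly what Lemma~\ref{lem.randomSampling} should provide) shows that after $\frac{\log(1/(\beta\epsilon))}{\log(1/(1-\beta))}$ random queries, with probability $\ge 1-\epsilon$ we have hit at least one vertex in \emph{every} component, and in particular at least one oppositely-labeled pair adjacent across every cut component. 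I would argue that it suffices to handle the deterministic bookkeeping on the event that random sampling has ``succeeded'' in this sense; all subsequent queries are the deterministic \textsc{mssp} bisections.

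Next I would bound the number of bisection queries. The key structural fact is the $\kappa$-clustered hypothesis: within a cut component $C_{rs}$, the meta-graph $H_\kappa$ is connected, so once one cut edge $e_1\in C_{rs}$ has had both endpoints revealed, every other cut edge of $C_{rs}$ is reachable by a chain of $\delta$-steps of length $\le\kappa$. I would set up a potential/charging argument: each \textsc{mssp} call bisects the current shortest shortest path $P$ between an oppositely labeled pair $x,y$. Two things can happen. Either $P$ has length $\le \kappa$ — in which case the bisection is ``local'' and, running to completion, a binary search on such a path costs at most $\lceil\log_2\kappa\rceil+1$ queries to pin down one cut edge on it; charging these to the cut edge that gets discovered gives the $\card{\partial C}(\lceil\log_2\kappa\rceil+1)$ term, since $\card C \le \card{\partial C}$ (actually one should charge to $\partial C$ vertices, with each discovered cut edge contributing its two endpoints — I would be careful here that the right count is $\card{\partial C}$, using that Step 6 deletes each cut edge as soon as it is found so no path is ever re-searched). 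Or $P$ has length $> \kappa$ — this can only happen when we are ``jumping'' from one already-partially-explored cut component to a not-yet-touched one, or spanning a gap of size between $\kappa$ and $n$; the number of such long-path bisections before the path length drops below $\kappa$ is at most $\lceil\log_2 n\rceil - \lceil\log_2\kappa\rceil$ per component (each bisection at least halves the relevant distance, and we only need to get from $\le n$ down to $\le\kappa$), giving the $m(\lceil\log_2 n\rceil - \lceil\log_2\kappa\rceil)$ term since there are $m$ cut components to ``reach''.

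The main obstacle, and where I would spend the most care, is making the two-regime charging argument rigorous — in particular showing that \textsc{mssp} never ``wastes'' queries: that the shortest shortest path is always either progressing toward discovering a fresh cut edge (chargeable to $\partial C$) or bridging toward a fresh component (chargeable to the $m(\log n - \log\kappa)$ budget), and that these two accountings do not double-count. This requires an invariant, maintained across the whole run, of the form: at any point, if a cut component has been ``entered'' (some incident oppositely-labeled pair is in $L$ and connected in the current graph), then the remaining searches inside it behave like binary searches on paths of length $\le\kappa$; and the only way length $>\kappa$ arises is between the $\le m$ distinct component-entry events. One subtlety is that removing cut edges in Step 6 changes $G$, hence changes shortest paths, so I would want to check that $\delta$ is computed in $G-C$ precisely so that it is monotone/stable under these deletions — which is why the definition uses $G-C$ rather than $G$. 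Also I should verify the ceilings combine correctly: $\lceil\log_2 n\rceil - \lceil\log_2\kappa\rceil$ is the number of halvings of an integer length from $\le n$ to $\le\kappa$, and $\lceil\log_2\kappa\rceil + 1$ bounds a binary search that must also pay one query to confirm the located edge. Summing the three contributions and invoking Proposition~\ref{prop.noise} only if one wants the noisy version (not needed here since we assumed a noiseless oracle) yields \eqref{eq.mainBound}.
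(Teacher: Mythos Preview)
Your proposal is essentially correct and follows the same overall structure as the paper's proof: bound the random-sampling phase via Lemma~\ref{lem.randomSampling}, then bound the aggressive phase by distinguishing the first discovery in each cut component (at most $m$ ``long'' searches starting from length $\le n$) from subsequent searches within a component (each starting from length $\le\kappa$ by the $\kappa$-clustered hypothesis, since once one cut edge of $C_{rs}$ is found there is always an undiscovered one within $\delta$-distance $\kappa$).

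The one organizational difference worth noting: the paper groups aggressive queries into \emph{runs}, where a run terminates whenever a new vertex of $\partial C$ is discovered, and proves as a standalone claim that a run starting at shortest-shortest-path length $\ell$ costs at most $\lceil\log_2\ell\rceil+1$ queries. Since there are at most $|\partial C|$ runs, at most $m$ of them start with $\ell\le n$, and the remaining $\le|\partial C|-m$ start with $\ell\le\kappa$, the bound $m(\lceil\log_2 n\rceil+1)+(|\partial C|-m)(\lceil\log_2\kappa\rceil+1)$ follows directly and rearranges algebraically to \eqref{eq.mainBound}. Your alternative of classifying individual queries by whether the \emph{current} path length exceeds $\kappa$ lands on the same arithmetic, but the specific assertion that ``$\lceil\log_2 n\rceil-\lceil\log_2\kappa\rceil$ halvings take an integer from $\le n$ down to $\le\kappa$'' is not literally true with these ceilings (e.g.\ $n=8,\kappa=3$ gives $3-2=1$, yet $8\to4\to2$ needs two steps). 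The paper's per-run accounting sidesteps this by never splitting a long run into a long and a short portion---it bounds the whole thing by $\lceil\log_2 n\rceil+1$ and lets the algebra produce the $-\lceil\log_2\kappa\rceil$. Your instinct to charge the short searches to vertices of $\partial C$ rather than to edges of $C$ is exactly right and matches the paper's reasoning.
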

Before we prove the theorem, we will make some remarks: 

{\bf 1.} As observed in \cite{afshani07}, using $\mathcal{O}(\left| \partial C \right|)$ queries, we can create a greedy cover of the graph and reduce the length of the longest binary search from $n$ to $\mathcal{O}\left(\frac{n}{\left|\partial C\right|}\right)$. With this simple modification to \ssq, the $\log_2 n$ in \eqref{eq.mainBound} can be replaced by $\log(n/\left| \partial C \right|)$.

{\bf 2.} Suppose that $G$ is a $\sqrt{n}\times\sqrt{n}$ grid graph where one half of this graph, i.e., a $\sqrt{n}\times\sqrt{n}/2$ rectangular grid, is labeled $+1$ and the other half  $-1$. In this case, since $m=1, \left|\partial C\right|=\mathcal{O}(\sqrt{n})$ and $\kappa =3$, Theorem~\ref{thm.main} says that \ssq needs to query at most $\mathcal{O}(\sqrt{n} + \log n)$ vertices. This is much smaller than the bound of $\mathcal{O}\left(\sqrt{n}\log n\right)$ which can be obtained using the results of \cite{afshani07}.

In fact, when $\kappa< n/|\partial C|$, it is clear that for $m\geq 1$, $m \log(n/|\partial C|) + (|\partial C|-m) \log\kappa < |\partial C| \log(n/|\partial C|)$, i.e., our bounds are strictly better than those in \cite{afshani07} \footnote{the first term in \eqref{eq.mainBound}  is due to random sampling and is present in both results. We have also ignored integer effects in making this observation. }. 

{\bf 3.} As discussed in Section~\ref{sec.preliminaries}, the number of queries the (i.i.d) noise tolerant version of \ssq submits can be bounded by the above quantity times 
$\left\lceil \frac{1}{2\gamma^2}\log\left( \frac{n}{\epsilon} \right) \right\rceil$
 and is guaranteed to recover $C$ exactly with probability at least $1-2 \epsilon$.
 
{\bf 4.} The first term in \eqref{eq.mainBound} is due to random sampling phase which can be quite large if $\beta$ is very small. Intuitively, if $V_i$ is a very small connected component, then one might have to query almost all vertices before one can even obtain a label from $V_i$. Therefore, the ``balanced'' situation, where $\beta$ is a constant independent of $n$, is ideal here. These considerations come up in \cite{afshani07} as well, and in fact their algorithm needs a priori knowledge of $\beta$. Such balancedness assumptions arise in various other lines of work as well (e.g., \cite{edsn11ActiveClustering, BalakrishnanXuKrishSingh11Spectral}). Finally, we remark that if one is willing to ignore errors made on small ``outlier'' components, then $\beta$ can be redefined in terms of only the sufficiently large $V_i$'s. This allows us to readily generalize our results to the case where one can approximately recover the labeling of the graph. 

{\bf 5.} As we argue in Appendix~\ref{sec.tightness}, \ssq is near optimal with respect to the complexity  parametrization introduced in this paper. That is, we show that there exists a family of graphs such that \emph{no algorithm} has significantly lower query complexity than \ssq. It will be interesting to investigate the ``instance-optimality'' property of \ssq, where we fix a graph and then ask if \ssq is near optimal in discovering labelings on this graph. We take a step in this direction in Appendix~\ref{sec.2dGrid} and show that for the 2-dimensional grid graph, the number of queries made by \ssq is near optimal.

\begin{proof}{\bf of Theorem~\ref{thm.main}\; } 
We will begin the proof with a definition and an observation. Recall (from Section~\ref{sec.complexity}) that the labeling function partitions the vertex set into similarly labeled components $V_1,\ldots, V_k$. Suppose that $W\subset V$ is such that for each $i\in\left\{1,2,\ldots,k\right\}$, $\left|W\cap V_i\right|\geq 1$. Then, it follows that for each $e\in C$, there exists a pair of vertices $v,v'$ in $W$ such that $f(v)\neq f(v')$ and $e$ lies on a path joining $v$ and $v'$. We call such a set a \emph{witness} to the cutset $C$. Since $\partial C$ is a witness to the cut-set $C$, it is clearly necessary for any algorithm to know the labels of a witness to the cut set $C$ in order to learn the cut set. 

Now, as described in Section~\ref{sec.algorithm}, the operation of \ssq consists of two phases -- (a) the random sampling phase and (b) the aggressive search phase. Observe that if \ssq knows the labels of a witness to the cut component $C_{rs}$, then, until it discovers the entire cut-component, \ssq remains in the aggressive search phase. Therefore, the goal of the random sampling phase is to find a witness to the cut set. This implies that we can bound from above the number of random queries \ssq needs before it can locate a witness set, and this is done in the following lemma. 
\begin{lemma}
\label{lem.randomSampling}
Consider a graph $\mathcal{G}=(V,E)$ and a labeling function $f$ with balancedness $\beta$. For all $\alpha>0$, a subset $L$ chosen uniformly at random
is a witness to the cut-set with probability at least $1-\alpha$, as long as
\begin{equation*}
\card{L}\geq \frac{\log(1/(\beta \alpha))}{\log\left({1}/{(1-\beta)}\right)}.
\end{equation*}
\end{lemma}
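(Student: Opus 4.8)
The plan is to reduce the event ``$L$ is a witness'' to a covering event and then apply a union bound. First I would recall from the witness definition that $L$ is a witness to the cut-set precisely when $L\cap V_i\neq\emptyset$ for every similarly-labeled component $V_i$, $i=1,\dots,k$; hence it suffices to upper bound the probability that $L$ \emph{misses} at least one of the components $V_1,\dots,V_k$.

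Next I would record the elementary bound $k\le 1/\beta$: the sets $V_1,\dots,V_k$ partition $V$ and each satisfies $\card{V_i}\ge\beta n$, so $n=\sum_i\card{V_i}\ge k\beta n$. Then, for a fixed $i$, I would bound the probability that a uniformly random $L$ avoids $V_i$. If $L$ is drawn with replacement this is immediate: $(1-\card{V_i}/n)^{\card{L}}\le(1-\beta)^{\card{L}}$. If instead $L$ is a uniformly random subset of the stated size, the miss probability equals $\binom{n-\card{V_i}}{\card{L}} / \binom{n}{\card{L}}=\prod_{j=0}^{\card{L}-1}\frac{n-\card{V_i}-j}{n-j}$, and since $\frac{n-\card{V_i}-j}{n-j}=1-\frac{\card{V_i}}{n-j}\le 1-\frac{\card{V_i}}{n}\le 1-\beta$ for each $j$, the same bound $(1-\beta)^{\card{L}}$ holds (in fact with a little slack).

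Combining these via a union bound gives $\Pr[L\text{ is not a witness}]\le k(1-\beta)^{\card{L}}\le\frac{1}{\beta}(1-\beta)^{\card{L}}$. Requiring the right-hand side to be at most $\alpha$ and taking logarithms — legitimate since $\beta>0$ forces $\log(1/(1-\beta))>0$ — rearranges to exactly $\card{L}\ge\frac{\log(1/(\beta\alpha))}{\log(1/(1-\beta))}$, the claimed threshold.

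As for difficulty, there is no serious obstacle: this is a coupon-collector-style union bound. The only points needing care are (i) the passage from ``witness'' to ``hits every $V_i$,'' which is already isolated in the witness definition, and (ii) ensuring the per-component miss probability is controlled by $(1-\beta)^{\card{L}}$ whether or not sampling is with replacement, which is the ratio-of-binomials computation above. I would also note, for use elsewhere, that in an actual run of \ssq the random queries are interleaved with aggressive-search queries and their number is itself random; that coupling is dealt with where the lemma is applied, not inside the lemma.
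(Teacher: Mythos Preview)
Your argument is correct and essentially identical to the paper's: both bound the number of components by $1/\beta$, bound the per-component miss probability of a uniform subset by the ratio of binomials $\binom{n-\card{V_i}}{\card{L}}/\binom{n}{\card{L}}\le(1-\beta)^{\card{L}}$, and finish with a union bound. Your write-up is in fact slightly more careful than the paper's in spelling out the product form of the binomial ratio and in covering the with-replacement case as well.
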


We refer the reader to Appendix~\ref{sec.randomSamplingProof} for the proof which is a straightforward combinatorial argument. 

Of course, since the random sampling phase and the aggressive search phase are interleaved, \ssq might end up needing far fewer random queries before a witness $W$ to the cut-set has been identified. 

Now, we will turn our attention to the aggressive search phase. The algorithm enters the aggressive search phase as soon there are oppositely labeled vertices that are connected (recall that the algorithm removes any cut-edge that it finds). Let $\ell_{S^2}(G,L,f)$ be the length of the shortest among all shortest paths connecting oppositely labeled vertices in $L$, and we will suppress this quantity's dependence on $G, L, f$ when the context is clear. After each step of the aggressive search phase, the shortest shortest paths that connect oppositely labeled vertices gets bisected. Suppose that, during this phase, the current shortest path length is $\ell_{S^2} = \ell\in \mathbb{N}$. Then depending on its parity, after one step $\ell_{S^2}$ gets updated to $\frac{\ell + 1}{2}$ (if $\ell$ is odd) or $\frac{\ell}{2}$ (if $\ell$ is even). Therefore, we have the following result .
\begin{claim}
\label{claim.oneRun}
If $\ell_{S^2} = \ell$, after no more than $r = \lceil \log_2\ell \rceil + 1$ aggressive steps, a cut-edge is found. 
\end{claim}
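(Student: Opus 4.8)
The plan is to track the quantity $\ell_{S^2}$ across the aggressive steps and show it reaches $1$ (forcing a cut-edge) within the claimed number of iterations. First I would set up the recursion carefully: when the shortest shortest path has length $\ell$ and the algorithm queries its mid-point, exactly one of two things happens. Either the mid-point vertex has a label differing from one of the two endpoints, in which case a new oppositely-labeled pair is created whose connecting sub-path has length $\lceil \ell/2 \rceil$; or, and this is the subtlety, the newly queried label could also create even shorter oppositely-labeled pairs with \emph{other} previously labeled vertices, or the bisection could split the path so that the relevant half has length $\lfloor \ell/2 \rfloor$. In all cases the new value $\ell_{S^2}'$ satisfies $\ell_{S^2}' \le \lceil \ell/2 \rceil$, since the algorithm always picks the \emph{global} minimum over all oppositely-labeled pairs. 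So the key one-step inequality is $\ell_{S^2} \mapsto \ell_{S^2}' \le \lceil \ell_{S^2}/2 \rceil$.

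Next I would iterate this bound. Starting from $\ell_{S^2} = \ell$, after $t$ steps we have $\ell_{S^2} \le \ell^{(t)}$ where $\ell^{(0)} = \ell$ and $\ell^{(t+1)} = \lceil \ell^{(t)}/2 \rceil$. A standard fact about this ``halving with ceiling'' recursion is that $\ell^{(t)} = \lceil \ell / 2^t \rceil$ for all $t \ge 0$ (this can be verified by induction using $\lceil \lceil a/2^t\rceil / 2 \rceil = \lceil a / 2^{t+1} \rceil$). Hence once $2^t \ge \ell$, i.e. once $t \ge \log_2 \ell$, we get $\ell^{(t)} = 1$. Taking $t = \lceil \log_2 \ell \rceil$ suffices to drive $\ell_{S^2}$ down to $1$.

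Finally I would handle the termination: when $\ell_{S^2} = 1$ there is an edge connecting two oppositely labeled vertices, and that edge is by definition a cut-edge, which the algorithm identifies and removes in Step 6. Counting the step that reduces $\ell_{S^2}$ from $1$ (equivalently, recognizing the cut-edge on the path of length $1$) gives the extra $+1$, for a total of $\lceil \log_2 \ell \rceil + 1$ aggressive steps, which is exactly $r$ as claimed. I would also note the edge case $\ell = 1$ at the outset, where $\lceil \log_2 1 \rceil + 1 = 1$ and indeed one step finds the cut-edge immediately.

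The main obstacle is justifying the one-step inequality $\ell_{S^2}' \le \lceil \ell_{S^2}/2 \rceil$ rigorously: one must argue that bisecting the \emph{particular} shortest shortest path cannot leave the global minimum path-length larger than $\lceil \ell/2 \rceil$ — in particular, that removing any cut-edges discovered in Step 6 only shortens or destroys oppositely-labeled paths (never lengthens the minimum), and that the mid-point query genuinely produces an oppositely-labeled pair at distance $\le \lceil \ell/2 \rceil$ along the old path (which still exists in the graph unless it contained the just-found cut-edge, in which case $\ell_{S^2}$ already dropped or a cut-edge was already found). Once this case analysis is pinned down, the rest is the elementary ceiling-halving computation.
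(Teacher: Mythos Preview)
Your proof is correct and in fact tidier than the paper's own argument. Both proofs rest on the same one-step inequality $\ell_{S^2}' \le \lceil \ell_{S^2}/2\rceil$, but you then invoke the closed form $\ell^{(t)} = \lceil \ell/2^t\rceil$ for the iterated ceiling-halving (via the identity $\lceil \lceil a/2^t\rceil/2\rceil = \lceil a/2^{t+1}\rceil$), which gives $\ell^{(t)}=1$ as soon as $t=\lceil\log_2\ell\rceil$. The paper instead bounds the length after $r$ steps by $\frac{\ell+2r-1}{2^r}$, shows this is at most $1$ when $r=\lceil\log_2\ell\rceil+1$ using $2^{r-1}\le 2^r-2r+1$ (which holds only for $r\ge 4$), and then handles $\ell<8$ by exhaustive check. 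Your route avoids this case split entirely and actually yields the slightly sharper bound $\lceil\log_2\ell\rceil$.

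Two minor remarks. First, your accounting for the ``$+1$'' is a bit off: once the bound reaches $1$, Step~6 of \emph{that same} aggressive step already removes the cut-edge, so no extra query is spent; you have really proved the stronger bound $\lceil\log_2\ell\rceil$, which of course implies the claim. Second, in your discussion of the one-step inequality, the phrase ``removing cut-edges \dots\ never lengthens the minimum'' is not literally correct (deleting edges can lengthen shortest paths globally), but your parenthetical has the right fix: the relevant half of the bisected path has all interior vertices unlabeled, so none of its edges are removed in Step~6 unless a cut-edge incident to the freshly queried midpoint was just found, in which case the run has already terminated. That is the clean way to pin down the case analysis you flag as the main obstacle.
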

\begin{proof}
First observe that after $r$ steps of the aggressive search phase, the length of the shortest shortest path is no more than $\frac{\ell + 2r-1}{2^r}$. Next, observe that if $r \geq 4$, then $2^{r-1} \leq 2^r - 2r +1$. 

The proof of this claim then follows from the following observation. Let us first suppose that $\ell \geq 8$, then setting $r = \lceil \log_2\ell \rceil + 1$, we have that $\ell \leq 2^{r-1} \leq 2^r - 2r +1$, which of course implies that $\frac{\ell+2r-1}{2^r}\leq 1$.
Therefore, after at most $r$ steps, the current shortest shortest path length drops to below 1, i.e., a cut-edge will be found. For the case when $\ell<8$, one can exhaustively check that this statement is true. 
\end{proof}
It is instructive to observe two things here: (a) even though $\ell_{S^2}$ gets (nearly) halved after every such aggressive step, it might not correspond to the length of a single path through the span of the above $r$ steps, and (b) at the end of $r$ aggressive steps as above, at least one new boundary vertex is found, therefore, the algorithm might end up uncovering multiple cut-edges after $r$ steps.

To bound the number of active queries needed, let us split up the aggressive search phase of \ssq into ``runs'', where each run ends when a new boundary vertex has been discovered, and commences either after the random sampling phase exposes a new path that connects oppositely labeled vertices or when the previous run ends in a new boundary vertex being discovered. Let $R$ be the total number of runs. Notice that $R\leq \left| \partial C \right|$. For each $i\in R$, let $G_i$ and $L_i$ denote the graph and the label set at the start of run $i$. Therefore, by Claim~\ref{claim.oneRun}, the total number of active learning queries needed can be upper bounded by $\sum_{i\in R}\left\{ \lceil \log_2 \left(\ell_{S^2}(G_i,L_i,f)\right) \rceil + 1\right\}$. 
\renewcommand{\thefootnote}{\fnsymbol{footnote}}

Now, observe that for each run in $i\in R$, it trivially holds that $\ell_{S^2}(G_i,L_i,f) \leq n$ \footnote{As in the Remark~1 after the statement of the theorem, this can be improved to $\mathcal{O}\left(n/\left| \partial C \right|\right)$ using a greedy cover of the graph \citep{afshani07}.}. Now suppose that one cut-edge is discovered in a cut-component $C_{rs}$.  Since the graph on $C_{rs}$ induced by $H_\kappa$ is $\kappa$-connected by the assumption of the theorem, until all the cut-edges are discovered, there exists at least one undiscovered cut-edge in $C_{rs}$ that is at most $\kappa$ away (in the sense of $\delta$) from one of the discovered cut-edges. Therefore, $\ell_{S^2}$ is no more than $\kappa$ until all cuts in $C_{rs}$ have been discovered. In other words, for $\left| C_{rs} \right|-1$ runs in $R$, $\ell_{S^2}\leq \kappa$. 

Reasoning similarly for each of the $m$ cut components, we have that there are no more than $m$ ``long'' runs (one per cut-component) and we will bound $\ell_{S^2}$ for these runs by $n$. From the argument above, after an edge from a cut-component has been found, $\ell_{S^2}$ is never more than $\kappa$ till all the edges of the cut-component are discovered. Therefore, we have the following upper bound on the number of aggressive queries needed.
\begin{align*}
\sum_{i\in R}\left\{ \lceil \log_2 \left(\ell_{S^2}(G_i,L_i,f)\right) \rceil + 1\right\} &\leq \left| \partial C \right| + m \lceil \log_2 n \rceil + \left( \left| \partial C \right| -m \right)\lceil \log_2 \kappa\rceil\\
& = m \left(\lceil \log_2 n \rceil -  \lceil\log_2\kappa\rceil\right) + \left| \partial C \right| \left(\lceil\log_2 \kappa\rceil + 1\right)
\end{align*}
Putting this together with the upper bound on the number of random queries needed, we get the desired result. 
\end{proof}

\section{\ssq for Nonparametric Learning}
Significant progress has been made in terms of characterizing the theoretical advantages of active learning in nonparametric settings.  For instance, minimax rates of convergence in active learning have been characterized under the so called {\em boundary fragment} assumption \citep{castro08,wang11,Minsker12}. This  model requires the Bayes decision boundary to have a functional form.  For example, if the feature space is $[0,1]^d$, then the boundary fragment model assumes that the Bayes decision boundary is defined by a curve of the form \mbox{$x_d = f(x_1,\dots,x_{d-1})$,} for some (smooth) function $f$. While such assumptions have proved useful for theoretical analysis, they are unrealistic in practice. Nonparametric active learning has also been analyzed in terms of abstract concepts such as bracketing and covering numbers \citep{hanneke11,koltchinskii10}, but it can be difficult to apply these tools in practice as well. The algorithm proposed in \cite{zhu03combining} offers a flexible approach to nonparametric active learning that appears to provide good results in practice, but comes with no theoretical performance guarantees.  A contribution of our paper is to fill the gap between the practical method of \cite{zhu03combining} and the theoretical work above.  The \ssq algorithm can adapt to nonparametric decision boundaries without the unrealistic boundary fragment assumption required by \cite{castro08}, for example.  We show that \ssq achieves the minimax rate of convergence for classification problems with decision boundaries in the so-called {\em box-counting} class, which is far less restrictive than the boundary fragment model.  To the best of our knowledge this is the first practical algorithm that is near minimax-optimal for this class of problems.

\subsection{Box-Counting Class}
Consider a binary classification problem on the feature space $[0,1]^d$, $d\geq 1$.  The box-counting class of decision boundaries generalizes the set of boundary fragments with Lipschitz regularity to sets with arbitrary orientations, piecewise smoothness, and multiple connected components. Thus, it is a more realistic assumption than boundary fragments for classification; see \cite{scott06} for more details. Let $w$ be an integer and let $P_w$ denote the regular partition of $[0,1]^d$ into hypercubes of side length $1/w$. Every classification rule can be specified by a set $B \subset [0,1]^d$ on which it predicts $+1$. Let $N_w(B)$ be the number of cells in $P_w$ that intersect the boundary of $B$, denoted by $\partial B$. For $c_1 > 0$, define the box-counting class $\mathcal{B}_{\mbox{\tiny BOX}}(c_1)$ as the collection of all sets $B$ such that $N_w(B) \leq c_1 w^{d-1}$ for all (sufficiently large) $w$.

\subsection{Problem Set-up}
Consider the active learning problem under the following assumptions.

\begin{enumerate}
\item[\bf A1] The Bayes optimal classifier $B_\ast$ resides in $\mathcal{B}_{\mbox{\tiny BOX}}(c_1)$. The corresponding boundary $\partial B_\ast$ divides $[0,1]^d$ into $k$ connected components\footnote{with respect to the usual topology on $\mathbb{R}^d$} (each labeled either $+1$ or $-1$) and each with volume at least $0<\beta<1$. Furthermore, the Hausdorff distance between any two components with the same label is at least $\Delta_1>0$.
\item[\bf A2] The marginal distribution of features $P(X=x)$ is uniform over $[0,1]^d$ (the results can be generalized to continuous distributions bounded away from zero).
\item[\bf A3]  The conditional distribution of the label at each feature is bounded away from $1/2$ by a positive margin; i.e., $\left|P(Y=1|X=x) -1/2\right|\geq \gamma > 0$ for all
$x\in [0,1]^d$.
\end{enumerate}

It can be checked that the set of distributions that satisfy A1-A3 contain the set of distributions BF$(1,1,C_1,0,\gamma)$ from \cite{castro08}.

Let $G$ denote the  regular square lattice on $[0,1]^d$ with $w$ vertices equispaced in each coordinate direction. Each vertex is associated with the center of a cell in the partition $P_w$ described above. Figure~\ref{fig.gridGraph} depicts a case where $d=2$, $w=15$ and $k=2$, where the subset of vertices contained in the set $B_\ast$ is indicated in red. The minimum distance $\Delta_1$ in A1 above ensures that, for $w$ sufficiently large, the lattice graph also consists of exactly $k$ connected components of vertices having the same label. \ssq can be used to solve the active learning problem by applying it to the lattice graph associated with the partition $P_w$. In this case, when \ssq requests a (noisy) label of a vertex of $G$, we randomly draw a feature from the cell corresponding to this vertex and return its label.  

\subsection*{ Near Minimax Optimality of \ssq. } The main result of this section shows that the excess risk of \ssq for the nonparametric active learning problem described above is minimax-optimal (up to a logarithmic factor).  Recall that the excess risk is the difference between the probability of error of the \ssq estimator and the Bayes error rate. The {\em active learning} minimax lower bound for excess risk is given by $n^{-1/(d-1)}$ \citep{castro08}, a significant improvement upon the {\em passive learning} bound of $n^{-1/d}$ \citep{scott06}.  Previous algorithms (nearly) achieving this rate required the boundary fragment assumption \cite{castro08}, and so \ssq is near-optimal for a much larger and more realistic range of problems. 
 \begin{theorem}
 \label{thm.nonparametric}
For any classification problem satisfying conditions A1-A3, there exists a constant $\mbox{$C(k,\beta, \Delta_1, \gamma) >0$}$ such that the excess risk achieved by \ssq with $n$ samples on this problem is no more than $C(\frac{\log n}{n})^{\frac{1}{d-1}},$ for $n$ large enough. 
\end{theorem}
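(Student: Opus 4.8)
\textbf{Proof proposal for Theorem~\ref{thm.nonparametric}.}
The plan is to reduce the nonparametric problem to the graph problem already analyzed in Theorem~\ref{thm.main}: run the noise-tolerant version of \ssq on the lattice graph $G$ at a resolution $w$ chosen as a function of $n$, invoke the cut-recovery guarantee, and then translate ``cut recovered'' into an excess-risk bound by charging the error to the cells that straddle $\partial B_\ast$. Throughout, the lattice has $n_G = w^d$ vertices and I assign each vertex the Bayes label of the center of its cell.

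First I would read off the three complexity parameters of the induced labeling on $G$ from assumptions A1--A3. The box-counting hypothesis $N_w(B_\ast)\le c_1 w^{d-1}$ gives $\card{\partial C} = O(d\,c_1 w^{d-1})$, since every cut edge is incident to a cell meeting $\partial B_\ast$ and each such cell has at most $2d$ incident edges. The separation $\Delta_1$ together with large $w$ (as noted in the set-up) forces exactly $k$ same-label vertex components, so the number of cut components is $m \le \binom{k}{2} = O(1)$; and the volume lower bound $\beta$ in A1, minus the $O(1/w)$ mass of boundary cells, yields balancedness $\beta_G \ge \beta/2$ for $w$ large. The one genuinely new ingredient is a geometric lemma asserting that $C$ is $\kappa$-clustered with $\kappa = \kappa(d,c_1,\Delta_1) = O(1)$, independent of $w$: intuitively, within a single component $\partial B_\ast$ is connected, so the cells it passes through form a geometrically connected chain, consecutive boundary cells can be joined by a path of length $O(d)$ staying on the appropriate side of the cut (i.e. inside $G-C$), and hence any two cut edges of a common cut component are linked by a chain of cut edges of bounded $\delta$-distance. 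Proving this lemma cleanly --- this is where the box-counting structure (rather than mere cardinality) and the large-$w$ regime really enter --- is the main obstacle; everything else is bookkeeping.

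Given these parameters, Theorem~\ref{thm.main} makes the noiseless query complexity $O\big(\log(1/(\beta\epsilon)) + m\log n_G + \card{\partial C}\big) = O\big(w^{d-1} + \log(1/\epsilon)\big)$ for $w$ large (using $\kappa = O(1)$, so $\lceil\log_2\kappa\rceil = O(1)$). The repetition-and-majority-vote device underlying Proposition~\ref{prop.noise} converts this to a sample bound: A3 gives margin $\gamma$, so $\Theta(\gamma^{-2}\log(n_G/\epsilon))$ repeated noisy label requests at any interior cell return its common Bayes label with probability $\ge 1-\epsilon/n_G$, and a union bound over the at most $n_G$ queried vertices shows all queried interior cells are labeled correctly with probability $\ge 1-\epsilon$. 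Choosing $\epsilon = 1/w$ makes $\log(1/\epsilon)=O(\log w)$, so with $n = O\big(\gamma^{-2} w^{d-1}\log w\big)$ samples \ssq recovers the (center-)cut with probability $\ge 1-2\epsilon$. Inverting, and using $\log w = \Theta(\log n)$ since $w$ is polynomial in $n$, gives $w = \Theta\big((\gamma^2 n/\log n)^{1/(d-1)}\big)$.

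Finally I would bound the excess risk. On the recovery event the completed labeling matches the center-label of every cell, hence agrees with the Bayes classifier on every cell lying entirely inside $B_\ast$ or its complement; disagreements are confined to the $\le c_1 w^{d-1}$ cells meeting $\partial B_\ast$, of total mass $\le c_1/w$ under the uniform marginal. Since the excess risk is $\mathbb{E}\big[\,\card{2\eta(X)-1}\,\ind(\hat g(X)\ne g^\ast(X))\big]$, it is at most the mass of the disagreement region, i.e. $\le c_1/w$ on the recovery event, while the failure event contributes $\le 2\epsilon = 2/w$; hence the excess risk is $O(1/w) = O\big((\log n/(\gamma^2 n))^{1/(d-1)}\big) = C(k,\beta,\Delta_1,\gamma)\,(\log n/n)^{1/(d-1)}$. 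A secondary technical point to dispatch here is that on the $O(1/w)$-fraction of cells straddling $\partial B_\ast$ the noisy oracle has no guaranteed margin, so its effective ``label'' may be undefined; this is handled by caching one majority-vote label per queried vertex (so that \ssq runs consistently) and observing (a) those cells are boundary cells already charged to the $c_1/w$ term, and (b) \ssq still cannot spend more than $O(w^{d-1})$ queries even while chasing such ambiguous edges, since every true-or-spurious cut edge is incident to one of the $O(w^{d-1})$ boundary cells, so Claim~\ref{claim.oneRun} and the clusteredness bound continue to apply.
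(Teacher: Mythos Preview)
Your proposal is correct and follows essentially the same route as the paper: apply Theorem~\ref{thm.main} to the lattice graph at scale $w$, lift to the $\gamma$-noisy oracle via Proposition~\ref{prop.noise}, charge the excess risk to the $O(w^{d-1})$ boundary cells, set $\epsilon = 1/w$, and invert. The only substantive difference is in the treatment of $\kappa$: you flag the bound $\kappa = O(1)$ as ``the main obstacle'' and leave it as an intuitive lemma, whereas the paper pins it down to $\kappa = 3$ by a direct argument on the lattice --- any cut edge lies on a $2$-dimensional face of the grid, the continuity of $\partial B_\ast$ forces a second cut edge on that same face, and the two endpoints of the second edge are each one step (in $G-C$) from the matching endpoints of the first, giving $\delta = 1+1+1 = 3$. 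So what you identify as the hard step has a short geometric resolution that does not require the full box-counting hypothesis, only that $\partial B_\ast$ is a continuous hypersurface. Your handling of the cells straddling $\partial B_\ast$ (where the oracle has no guaranteed margin) is actually more explicit than the paper's, which dispatches the issue in a footnote by simply declaring those vertices as failures and absorbing them into the boundary term.
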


To prove this theorem, we use Theorem~\ref{thm.main} to bound the number of samples required by \ssq to be certain (with high probability) of the classification rule everywhere but the cells that are intersected by the boundary $\partial B_\ast$. We then use the regularity assumptions we make on the distribution of the features and the complexity of the boundary to arrive at a bound on the excess risk of the \ssq algorithm. This allows us to estimate the excess risk as a function of the number of samples obtained. Refer to Appendix~\ref{sec.proofOfTheorem5} for the details of the proof.  

\begin{figure}[h!]
\begin{center}
\includegraphics[scale=0.55]{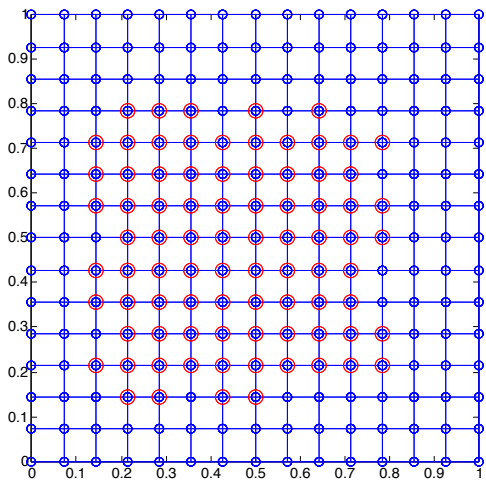}  
\captionof{figure}{15$\times$15 grid graph used in experiments. The vertices with the red circle indicate the positive class.}
\label{fig.gridGraph}
\end{center}
\end{figure}

\begin{table}
\begin{center}
\begin{tabular}{|l|@{\hspace{1mm}}l@{\hspace{2mm}}l@{\hspace{2mm}}l@{\hspace{2mm}}l@{\hspace{1mm}}|}
\hline
Graph : ($n$, $|C|$, $\card{\partial C}$) & \multicolumn{4}{l|}{\begin{tabular}{@{}l@{}}Mean $\partial C$-query \\ complexity (10 trials)\end{tabular}}  \\
& \ssq & AFS & ZLG & BND \\
\hline
Grid \;\;: (225, 32 ,68) & {\bf 88.8} & 160.4 & 91 & 192\\ 
1 v 2 \;\,: (400, 99, 92) & {\bf 96.4} & 223.2 & 102.6 & 370.2 \\
4 v 9 \;\,: (400, 457, 290) & {\bf 290.6} & 367.2  & 292.3 & 362.4\\
CVR\;\,: (380, 530, 234) & {\bf 235.8} & 332.1 & 236.2  &371.1\\ 
\hline
\end{tabular}
      \captionof{table}{Performance of \ssq, AFS, ZLG, BND.}
\end{center}
\end{table}

\section{Experiments}
\label{sec.experiments}
We performed some preliminary experiments on the following data sets: 

\noindent(a) {\bf Digits}: This dataset is from the Cedar Buffalo binary digits database originally~\cite{hull94database}. 
We preprocessed the digits by reducing the size of each image down to a  16x16 grid with down-sampling and Gaussian smoothing~\cite{lecun90handwritten}.  Each image is thus a 256-dimensional vector with elements being gray-scale pixel values in 0--255. We considered two separate binary classification tasks on this data set: 1 vs 2 and 4 vs 9. Intuitively one might expect the former task to be much simpler than the latter. For each task, we randomly chose 200 digits in the positive class and 200 in the negative. We computed the Euclidean distance between these 400 digits based on their  feature vectors. We then constructed a symmetrized 10-nearest-neighbor graph, with an edge between images $i,j$ iff $i$ is among $j$'s 10 nearest neighbors or vice versa.
Each task is thus represented by a graph with exactly 400 nodes and about 3000 undirected unweighted edges. Nonetheless, due to the intrinsic confusability, the cut size and the boundary (i.e., edges connecting the two classes) varies drastically across the tasks: 1 vs 2 has a boundary of 92, while 4 vs 9 has a boundary of 290. 

\noindent(b) {\bf Congressional Voting Records (CVR):} This is the congressional voting records data set from the UCI machine learning repository \citep{Bache+Lichman:2013}. We created a graph out of this by thresholding (at 0.5) the Euclidean distance between the data points. This was then processed to retain the largest connected component which had 380 vertices and a boundary size of 234.

\noindent(c) {\bf Grid: } This is a synthetic example of a 15x15 grid of vertices with a positive core in the center.  The core was generated from a square by randomly dithering its boundary. See Figure~\ref{fig.gridGraph}. 

We compared the performance of four algorithms: (a) $\mathbf{S^2}$ (b) {\bf AFS} -- the active learning algorithm from \cite{afshani07}; (c) {\bf ZLG} -- the algorithm from \cite{zhu03combining}; and (d)~{\bf BND} -- the experiment design-like algorithm from \cite{GuHan12bound}. 

We show the number of queries needed before all nodes in $\partial{C}$ have been queried.
This number, which we call $\partial C$-query complexity, is by definition no smaller than $\card{\partial C}$. 
Notice that before completely querying $\partial C$, it is impossible for any algorithm to guarantee zero error without prior assumptions.
Thus we posit that $\partial C$-query complexity is a sensible measure for the setting considered in this paper. In fact $\partial C$-query complexity can be thought of as the experimental analogue of the theoretical query complexity of Section~\ref{sec.analysis}. These results are shown in Table~1. The bold figures show the best performance in each experiment. As can be seen, \ssq clearly outperforms AFS and BOUND as suggested by our theory. It is quite surprising to see how well ZLG  performs given that it was not designed with this objective in mind. 
We believe that trying to understanding this will be a fruitful avenue for future work.

\bibliographystyle{plainnat}
\bibliography{dasarathy15}

\section*{Acknowledgements}
RN is supported in part by the National Science Foundation grant CCF‐1218189 and the National Institutes of Health grant 1 U54 AI117924-01; XZ is supported in part by National Science Foundation grants IIS 0916038, IIS 0953219, IIS 1216758, and the National Institutes of Health grant 1 U54 AI117924-01.

\appendix
\newtheorem*{prop1}{Proposition~\ref{prop.noise}}
\section{Proof of Proposition~\ref{prop.noise}}
\label{sec.proofProp1}
\begin{prop1}
Suppose $\mathcal{A}$ is an algorithm that has access to $f$ through a noiseless oracle (i.e., a $0$noisy oracle), and suppose that it has a $\epsilon-$query complexity of $q$, then for each $\gamma\in (0,0.5)$, there exists an algorithm $\tilde{\mathcal{A}}$ which, using a $\gamma-$noisy oracle achieves a $2 \epsilon-$query complexity given by $q\times \left\lceil \frac{1}{2(0.5 - \gamma)^2}\log\left( \frac{n}{\epsilon} \right) \right\rceil$.
\end{prop1}
\begin{proof}
Given a $\gamma>0$, one can design $\tilde{\mathcal{A}}$ as follows. $\tilde{\mathcal{A}}$ simply runs $\mathcal{A}$ as a sub-routine. Suppose $\mathcal{A}$ requires the label of a vertex $v\in V$ to proceed, $\tilde{\mathcal{A}}$ intercepts this label request and repeatedly queries the $\gamma-$noisy oracle $r$ (as defined above) times for the label of $v$. It then returns the majority vote $\tilde{f}(v)$ as the label of $v$ to $\mathcal{A}$. The probability that such an algorithm fails can be bounded as follows. 
\begin{align}
\mathbb{P}\left[ \tilde{\mathcal{A}} \mbox{ fails after $rq$ queries} \right]&\leq \mathbb{P}\left[ \mbox{$\exists v\in V$ s.t. $\tilde{f}(v) \neq f(v)$ }\right]\\&\qquad + \mathbb{P}\left[ \mathcal{A} \mbox{ fails after $q$ queries}\mid \tilde{f}(v) = f(v), \forall v\in V\right]\\
&\stackrel{(a)}{\leq} n\times  \mathbb{P}\left[ \tilde{f}(v) \neq f(v) \right] + \epsilon\\
&\stackrel{(b)}{\leq} n\times  e^{-2r (0.5 -\gamma)^2} + \epsilon \label{eq:prop1},
\end{align}
where $(a)$ follows from the union bound and fact that $\mathcal{A}$ has a $\epsilon-$query complexity of $q$. $(b)$ follows from applying the Chernoff bound \citep{chernoff1952measure} to the majority voting procedure :  $\mathbb{P}\left[ \tilde{f}(v) \neq f(v) \right] = \mathbb{P}\left[ {\rm Bin}(r, \gamma) \geq 0.5 \times r \right] \leq e^{-2r (0.5 - \gamma)^2}$. Therefore, if we set $r$ as in the statement of the proposition,  we get the desired result. 
\end{proof}
\section{Proof of Lemma~\ref{lem.randomSampling}}
\label{sec.randomSamplingProof}
\newtheorem*{randomSamplingLemma}{Lemma~\ref{lem.randomSampling}}
\begin{randomSamplingLemma}
Consider a graph $\mathcal{G}=(V,E)$ and a labeling function $f$ with balancedness $\beta$. For all $\alpha>0$, a subset $L$ chosen uniformly at random 
is a witness to the cut-set with probability at least $1-\alpha$, as long as
$\card{L}\geq \frac{\log(1/(\beta \alpha))}{\log\left({1}/{(1-\beta)}\right)}.$
\end{randomSamplingLemma}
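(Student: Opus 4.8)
The plan is to show that a uniformly random multiset $L$ of the stated size hits every vertex component $V_1,\dots,V_k$ with probability at least $1-\alpha$; since a set that intersects every $V_i$ is a witness to the cut-set (as established in the proof of Theorem~\ref{thm.main}), this suffices. First I would fix an index $i$ and bound the probability that $L$ misses $V_i$ entirely. If $L$ is formed by drawing $|L|$ vertices, then the probability that a single draw avoids $V_i$ is at most $1-|V_i|/n \leq 1-\beta$, using the definition $\beta = \min_i |V_i|/n$. By independence (or, if $L$ is a size-$|L|$ subset chosen without replacement, by a standard comparison showing sampling without replacement is at least as concentrated), the probability that all $|L|$ draws miss $V_i$ is at most $(1-\beta)^{|L|}$.

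\newtheorem*{randomSamplingLemmaPlan}{Plan for Lemma~\ref{lem.randomSampling}}

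Next I would apply the union bound over the $k$ components:
\[
\mathbb{P}\left[ L \text{ is not a witness} \right] \leq \sum_{i=1}^{k} \mathbb{P}\left[ L \cap V_i = \emptyset \right] \leq k\,(1-\beta)^{|L|}.
\]
To turn this into the clean statement I would use the crude bound $k \leq 1/\beta$, which holds because the $k$ components each have at least a $\beta$ fraction of the $n$ vertices and they are disjoint. Hence the failure probability is at most $(1-\beta)^{|L|}/\beta$. Setting this quantity to be at most $\alpha$ and solving for $|L|$ gives $|L| \geq \log(1/(\beta\alpha)) / \log(1/(1-\beta))$, which is exactly the claimed bound; I would note that the logarithm in the denominator is positive since $\beta \in (0,1)$, so the division is legitimate.

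The only genuinely delicate point is the sampling model: the lemma says ``chosen uniformly at random'' without specifying with or without replacement, and the algorithm in Section~\ref{sec.algorithm} draws distinct unlabeled vertices. I expect the main (minor) obstacle to be justifying that the without-replacement bound $\mathbb{P}[L \cap V_i = \emptyset] = \binom{n-|V_i|}{|L|}/\binom{n}{|L|}$ is itself at most $(1-\beta)^{|L|}$; this follows from the elementary inequality $\frac{n-|V_i|-j}{n-j} \leq \frac{n-|V_i|}{n}$ for each $0 \leq j < |L|$, multiplying these term by term. Everything else is the routine union-bound-and-solve calculation sketched above, so I would carry the steps out in the order: (1) single-component miss probability, (2) union bound over components, (3) bound $k \leq 1/\beta$, (4) solve for $|L|$.
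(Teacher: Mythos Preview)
Your proposal is correct and follows essentially the same route as the paper: bound the miss probability of a single component by $(1-\beta)^{|L|}$ via the without-replacement ratio $\binom{n-|V_i|}{|L|}/\binom{n}{|L|}$, union-bound over the components using $k\leq 1/\beta$, and solve for $|L|$. Your termwise justification of $\binom{n-|V_i|}{|L|}/\binom{n}{|L|}\leq (1-\beta)^{|L|}$ is in fact more explicit than the paper's one-line ``follows from $\bar\beta<1$''.
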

\begin{proof}
The smallest component in $V$ is of size at least $\beta n$. Let $\mathcal{E}$ denote the event that there exists a component $V_i$ such that $V_i\cap L = \emptyset$ and let $\bar{\beta} = 1-\beta$. Then, using the union bound and ignoring integer effects, we have
$\mathbb{P}\left[\mathcal{E}\right]\leq\frac{1}{\beta}\cdot\frac{{\bar{\beta}n\choose \left| L \right|}}{{n\choose \left| L \right|}}< \frac{\bar{\beta}^{\left| L \right|}}{\beta},$ where the last inequality follows from the fact that $\bar{\beta}< 1$. 

To conclude, we observe that if we pick $\left| L \right|$ as stated in the lemma, then the right-hand side of the above equation drops below $\alpha$. This concludes the proof.
\end{proof}

\section{Proof of Theorem ~\ref{thm.nonparametric}}
\label{sec.proofOfTheorem5}
Recall that we propose to run \ssq on the lattice graph $G$ corresponding to the partition $P_w$ of $[0,1]^d$. And, recall that when \ssq requests a noisy sample of a vertex in $G$, a random feature is drawn from the cell that correpsonds to this vertex and its label is returned. Assumption A3 therefore implies that \ssq has access to a $\gamma-$noisy oracle\footnote{To be precise, \ssq has access to a $\gamma-$noisy oracle only when querying a vertex whose cell in $P_w$ does not intersect the boundary $\partial B_\ast$. However, in the sequel we will assume that \ssq fails to correctly predict the label of any vertex whose cell intersects the boundary, and therefore, this detail does not affect the analysis.}. 
In what follows, we will derive bounds on the $\epsilon-$query complexity of \ssq in this setting assuming it has access to a noiseless oracle. Then, arguing as in Proposition~\ref{prop.noise}, we can repeat each such query requested by \ssq a total of  $\frac{1}{2(0.5 - \gamma)^2}\log(w^d/\epsilon)$ times and take a majority vote in order to get the $2\epsilon-$query complexity for a $\gamma-$noisy oracle. So, in the sequel we will assume that we have access to a noise free oracle. 

We assume that $w$ is sufficiently large so that $w > \max\{\beta^{-1/d}, 2 \Delta_1^{-1}\}$. The first condition is needed since each homogenously labeled component of the problem corresponds to at least $\beta w^d$ vertices of the lattice graph $G$ and the second condition ensures that there are exactly $k$ connected components in $G$. 

First, we observe that by Lemma~\ref{lem.randomSampling}, if \ssq randomly queries at least $\log(1/\beta\epsilon)/\log(1/(1 - \beta))$ vertices, then with probability greater than $1-\epsilon$ it will discover at least one vertex in each of the $k$ connected components.   Next, we observe that since there are $k$ connected components of vertices, there are no more than $k^2/4$ cut components in the cut-set\footnote{Suppose there are $z_1$ components of label $+1$ and $z_2$ components of label $-1$, then there are at most $z_1 z_2$ cut components. The maximum value this can take when $z_1 + z_2 = k$ is $k^2/4$ by the arithmetic mean - geometric mean inequality.}. As described in the proof of Theorem~\ref{thm.main}, once it knows a witness to the cut set, the number of queries made by \ssq can be bounded by adding the number of queries it makes to first discover one cut-edge per each of these cut components to the number of queries needed to  perform local search in each of the cut components. Reasoning as in the proof of Theorem~\ref{thm.main}, for each cut component, \ssq requires no more than $\log w^d$ queries to find one cut edge. To bound the number of queries needed for the local search, we need to bound the size of the boundary $\left| \partial C \right|$ and $\kappa$, the clusteredness parameter (see Definition~\ref{def.clusteredness}) of the cut set in $G$. Towards this end, observe that by assumption A1, there are at most $c_1 w^{d-1}$ cells of the partition $P_w$ that intersects with $\partial B_\ast$. Since each cell has at most $d 2^{d-1}$ edges, we can conclude that $\left| \partial C \right|\leq 2c_1d (2w)^{d-1}$. To bound $\kappa$, let us fix a cut component  and note that given one cut edge, there must exist at least one other cut on a two dimensional face of the hypercube containing the first cut edge. Furthermore, this cut edge is contained on a path of length $3$ between the vertices of the first cut edge. Since the boundaries of the homogenously labeled components in $[0,1]^d$ are continuous (by definition), it is easy to see that there is an ordering of cut-edges in this cut component $e_1,e_2,\dots$ such that the distances between them, per Definition~\ref{def.clusteredness}, satisfy $\delta(e_i,e_{i+1})=3$. Since this holds true for all the cut components, this implies that the cut-set is 3-clustered, i.e, $\kappa=3$. Therefore, the complete boundary  can be determined by labeling no more than $(\lceil \log \kappa \rceil  + 1) \left| \partial C \right| = 6c_1 d(2w)^{d-1}$ vertices (since $\lceil \log 3\rceil=2$). As observed earlier, if we repeat each of the above queries $\frac{1}{2(0.5 - \gamma)^2} \log(w^d/\epsilon)$ times, we can see that if the  number of queries made by \ssq satisfies 
\begin{equation}
n \geq \left(6c_1 (2w)^{d-1} + \frac{k^2}{4} \log w^d + \frac{\log(1/\beta\epsilon)}{\log(1/(1 - \beta))}\right)\times \frac{\log(w^d/\epsilon)}{2(0.5 - \gamma)^2}\label{eq.sampleComplexityNonparametric},
\end{equation} 
one can ensure that with probability at least $1-2 \epsilon$, \ssq will only possibly make mistakes on the boundary of the Bayes optimal classifier $\partial B_\ast$.  Let $\mathcal{E}$ be this event and therefore $\mathbb{P}[\mathcal{E}^c] \leq 2 \epsilon$. 

If we let $S^2(X)$ and $B_\ast(X)$ denote the prediction of a feature $X$ by \ssq and the Bayes optimal classifier respectively, observe that the excess risk $ER[S^2]$ of \ssq satisfies 
\begin{align}
ER[S^2] & = \mathbb{P}\left[ S^2(X) \neq Y \right] - \mathbb{P}\left[ B_\ast(X) \neq Y \right]\\
&\stackrel{(a)}{\leq} \mathbb{P}[\mathcal{E}^c] + \mathbb{P}\left[ S^2(X) \neq Y \middle | \mathcal{E}\right] - \mathbb{P}\left[ B_\ast(X) \neq Y \middle | \mathcal{E}\right]\\
&\stackrel{(b)}{\leq} 2 \epsilon + \sum_{p\in P_w \cap \partial B_\ast}\mathbb{P}\left[ X\in  p \right]\\
&= \min\{2\epsilon + c_1 2^{d}w^{-1},1\},
\end{align}
where $(a)$ follows from conditioning on $\mathcal{E}$. $(b)$ follows from observing first  that conditioned on $\mathcal{E}$, $S^2(X)$ agrees with $B_\ast(X)$ everywhere except on the cells of $P_w$ that intersect the Bayes optimal boundary $\partial B_\ast$, and  that on this set, we can bound $\mathbb{P}\left[ S^2(X)\neq Y|\mathcal{E} \right] - \mathbb{P}\left[ B_\ast(X)\neq Y|\mathcal{E} \right] \leq 1$. The last step follows since by assumption A1, there are at most $2c_1 (2w)^{d-1}$ vertices on the boundary, and since, by assumption A2,  the probability of a randomly generated feature $X$ belonging to a specific boundary cell  is $w^{-d}$. Therefore, by taking $\epsilon = 1/w$ we have
that the excess risk of \ssq is bounded from above by  $(2 + c_12^d) w^{-1}$ and for this choice of $\epsilon$, the number of samples $n$  satisfies 
\begin{equation}
n \geq \left(6c_1 (2w)^{d-1} + \frac{k^2}{4} \log w^d + \frac{\log(w/\beta)}{\log(1/(1 - \beta))}\right)\times \frac{\log(w^{d+1})}{2(0.5 -\gamma)^2}\label{eq.sampleComplexityNonparametric2}.
\end{equation} 
Finaly, we conclude the proof by observing that if $n$ satisfies \eqref{eq.sampleComplexityNonparametric2} and if $w$ is sufficiently large, there exists a constant $C$ which depends only on $c_1, k, \beta, \gamma, d$ such that $C \left(\frac{\log n}{n}\right)^{\frac{1}{d-1}} \geq (2+c_12^d) w^{-1}$.

\section{The Tightness of \ssq}
\label{sec.tightness}
We will now argue that the upper bounds we derived in the main paper are tight. Towards this end, in what follows, we will assume that a witness (cf. Section~\ref{sec.analysis}) to the cut set is known. This allows us to separate the effect of the random sampling phase and the learning phase in such problems. 
\subsection{Parameter optimality of \ssq}
In this section, we will show that \ssq is near optimal with respect to the complexity parametrization introduced in Section~\ref{sec.complexity}. In particular, we will show that given particular values for $n, \kappa, m$ and  $\left| \partial C \right|$, there exists a graph such that no algorithm can significantly improve upon \ssq. For what follows, we will set $c \triangleq \left| \partial C \right|$. Let us define $$\mathcal{P} \triangleq \left\{ \left( n,c,m,\kappa \right) \in \mathbb{N}^4 : m\leq c, c(\kappa+2)\leq n \right\}.$$We will show that as long the given problem parameters are in $\mathcal{P}$, \ssq is near optimal. While it is trivially true that the number of cut components, $m$ has to be no more than $c$, the second condition places finer restrictions on the parameter values. These conditions are specific to the construction we present below and can be significantly weakened at the expense of clarity of presentation. 

\begin{figure}[h!]
\centering
  \includegraphics[scale = 0.45]{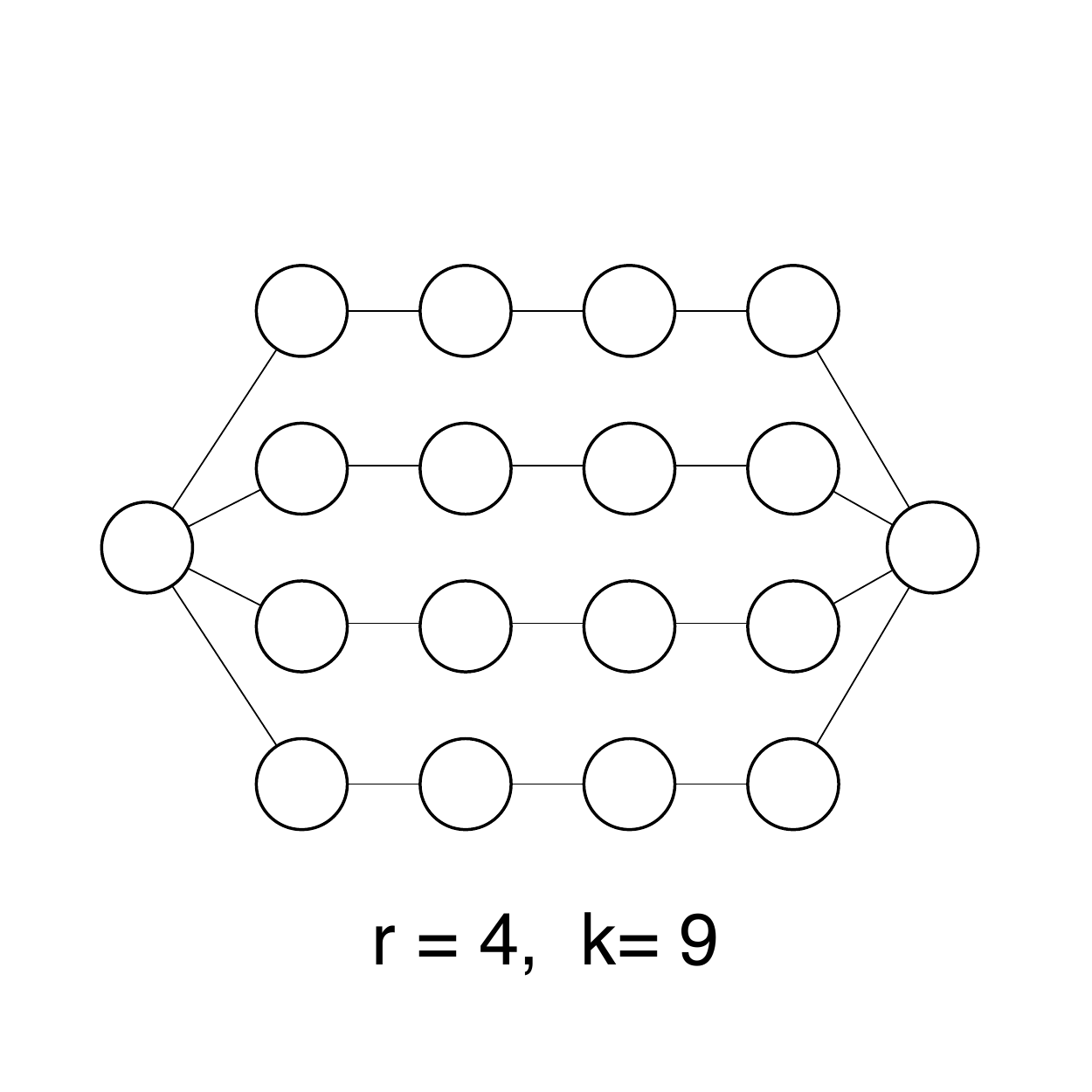}
  \caption{$G(4,9)$}
  \label{fig.tightness}
\end{figure}
\begin{figure}[h!]
\centering
  \includegraphics[scale = 0.5]{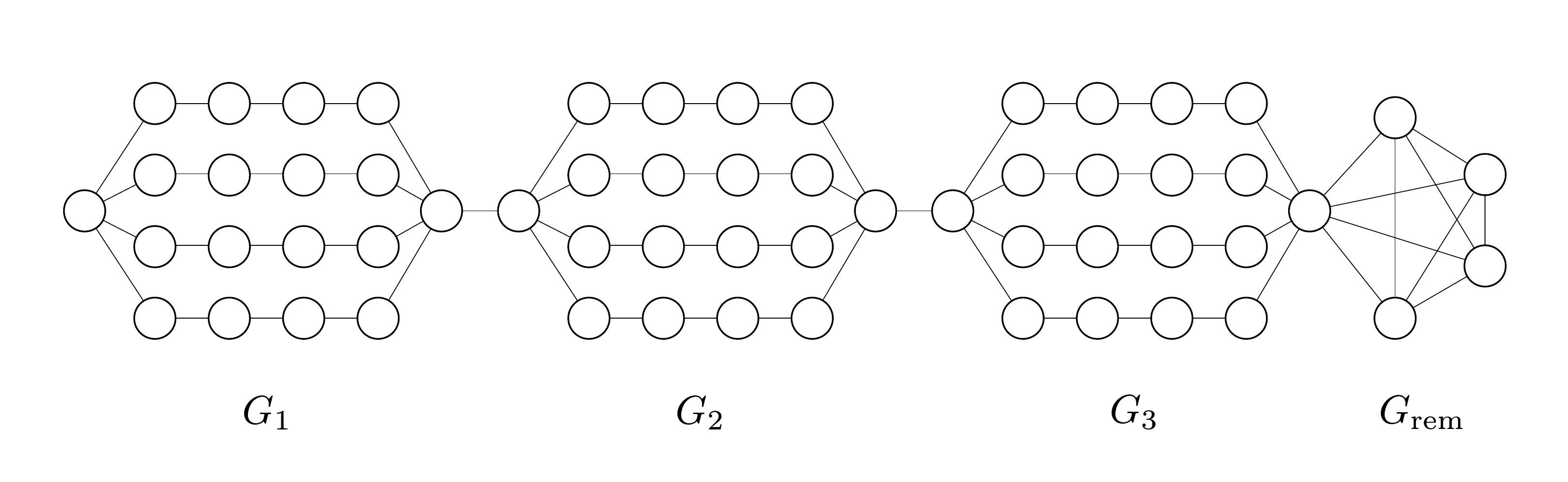}
  \caption{$p=3$ copies of $G(4,9)$ linked together}
  \label{fig.tightness2}
\end{figure}

We will now prove the following theorem. 
\begin{theorem}
Given a set of values for $n,c,m,\kappa \in \mathcal{P}$, there exists a graph $G$ on $n$ vertices and a set of labelings $\mathcal{F}$ on these vertices such that each $f\in \mathcal{F}$ satisfies: 
\begin{itemize}
\item $f$ induces \emph{no more than} $c$ cuts in the graph. 
\item The number of cut-components is $m$
\item Each component is $\kappa$-clustered.
\end{itemize}
Furthermore, $\log \left|\mathcal{F}\right|$ is no smaller than 
\begin{align*}
m &\log \left(\frac{1}{m}\left\lfloor \frac{n}{\left\lfloor \frac{c}{m} \right\rfloor (\left\lfloor \frac{\kappa - 1}{2} \right\rfloor) + 2} \right\rfloor\times \left(\left\lfloor \frac{\kappa - 1}{2} \right\rfloor + 1\right)\right)\\
&\qquad + \left(m \left\lfloor \frac{c}{m} \right\rfloor - m\right) \log\left(\left\lfloor \frac{\kappa-1}{2}\right\rfloor + 1\right).
\end{align*}
\end{theorem}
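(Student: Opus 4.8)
The strategy is to give an explicit construction of a graph $G$ together with a large family $\mathcal{F}$ of labelings, all of which share the prescribed parameters $(n,c,m,\kappa)$, and then to count $\mathcal{F}$. The picture suggested by Figures~\ref{fig.tightness} and \ref{fig.tightness2} is a "gadget" graph $G(a,b)$ (for suitable $a,b$) consisting of many parallel paths, and the final $G$ is $m$ disjoint (or weakly linked) copies of such a gadget, one per cut-component. First I would build a single cut-component gadget: take $\lfloor c/m \rfloor$ "slots" for cut edges, and arrange $\lfloor \frac{1}{m}\lfloor n / (\lfloor c/m\rfloor \lfloor\frac{\kappa-1}{2}\rfloor + 2)\rfloor\rfloor$ parallel vertex-disjoint paths between a designated $+1$ anchor region and a $-1$ anchor region, each path having length roughly $\lfloor c/m\rfloor \lfloor\frac{\kappa-1}{2}\rfloor + 2$. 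On each such path the labeling is forced to switch from $+1$ to $-1$ exactly once; the freedom is \emph{where} along the path (within a window of $\lfloor\frac{\kappa-1}{2}\rfloor + 1$ admissible positions per "block") the cut edge sits. The condition $c(\kappa+2)\le n$ in $\mathcal{P}$ is exactly what guarantees that enough disjoint paths of the required length fit inside $n$ vertices; the condition $m\le c$ makes $\lfloor c/m\rfloor \ge 1$ so each component genuinely carries a cut.

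Next I would verify that every $f$ in this family has the claimed parameters. The number of cut edges is at most $\lfloor c/m\rfloor$ per gadget times $m$ gadgets, i.e.\ at most $c$ (first bullet). By construction the cut edges of distinct gadgets lie in distinct cut-components and are at $\delta$-distance $\infty$ from one another, while within a gadget consecutive cut edges are at $\delta$-distance at most $\kappa$: the two halves of a path between consecutive cut-edge windows contribute two segments of length $\le \lfloor\frac{\kappa-1}{2}\rfloor$ each, plus the "$+1$" in the definition of $\delta$, totalling $\le 2\lfloor\frac{\kappa-1}{2}\rfloor + 1 \le \kappa$. Hence $H_\kappa$ has exactly $m$ connected components, so the cut-set is $\kappa$-clustered with $m$ cut-components (second and third bullets). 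I also need to check the anchors are set up so the homogeneous vertex components are well defined and the balancedness is whatever the statement implicitly allows — but since the theorem only asserts the three bullet conditions plus the cardinality bound, this bookkeeping is light.

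Finally I would count: in each gadget, each of the $P := \lfloor \frac{1}{m}\lfloor n / (\lfloor c/m\rfloor \lfloor\frac{\kappa-1}{2}\rfloor + 2)\rfloor\rfloor$ parallel paths independently hosts one cut edge, which in the "first block" may be placed in any of $\lfloor\frac{\kappa-1}{2}\rfloor + 1$ positions, and each of the remaining $\lfloor c/m\rfloor - 1$ blocks along a path (there are $\lfloor c/m\rfloor$ blocks per path, one cut edge forced in each) likewise admits $\lfloor\frac{\kappa-1}{2}\rfloor + 1$ placements. Multiplying over the $m$ gadgets and taking logarithms yields
\[
\log|\mathcal{F}| \;\ge\; m\log\!\Big(P\cdot(\lfloor\tfrac{\kappa-1}{2}\rfloor+1)\Big) \;+\; \big(m\lfloor\tfrac{c}{m}\rfloor - m\big)\log\!\big(\lfloor\tfrac{\kappa-1}{2}\rfloor+1\big),
\]
which is exactly the stated bound once $P$ is written out. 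The main obstacle is not the counting but making the gadget \emph{rigid}: I must design the paths and the forced labels on their endpoints (and on any "spine" connecting the gadgets) so that (i) no unintended shortcuts in $G-C$ shrink $\delta$ below the values I claim, and (ii) no unintended extra cut edges appear outside the designated windows — i.e.\ the only degrees of freedom really are the per-block cut positions. Getting the disjointness/length arithmetic to close under the floor functions, using precisely the two inequalities defining $\mathcal{P}$, is the delicate part; the rest is routine verification against Definition~\ref{def.clusteredness}.
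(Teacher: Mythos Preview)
Your outline contains a real gap: the gadget you describe is not internally consistent, and no reading of it produces the first summand of the bound.

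You say each of the $P$ parallel paths ``is forced to switch from $+1$ to $-1$ exactly once'' but then also that ``there are $\lfloor c/m\rfloor$ blocks per path, one cut edge forced in each.'' These cannot both hold. If you place $\lfloor c/m\rfloor>1$ cut edges along a \emph{single} simple path, the path breaks into $\lfloor c/m\rfloor+1$ vertex components and the adjacent cut edges lie in \emph{different} cut-components $C_{rs}$ (indeed $\delta(e_j,e_{j+1})=\infty$ in $G-C$, since the $+$-endpoints of consecutive cuts are separated). So the number of cut-components would be about $m\lfloor c/m\rfloor$, not $m$. If instead you keep one cut per path and let the gadget be $P$ parallel paths between two hubs, then the total number of cuts is $mP$, forcing $P\le\lfloor c/m\rfloor$; the count then collapses to $m\lfloor c/m\rfloor\log(\lfloor\frac{\kappa-1}{2}\rfloor+1)$, which is only the second summand.

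The point you are missing is where the first summand actually comes from. In the paper the graph is built from $p\approx n/(\lfloor c/m\rfloor\lfloor\frac{\kappa-1}{2}\rfloor+2)$ identical gadgets $G(r,k)$ (with $r=\lfloor c/m\rfloor$ parallel short paths of $\frac{k+1}{2}=\lfloor\frac{\kappa-1}{2}\rfloor+1$ edges each, between two hubs), chained together, and the family $\mathcal{F}$ is obtained by first \emph{choosing which $m$ of the $p$ gadgets carry a cut-component} and then, within each chosen gadget, placing one cut on each of its $r$ paths. The $\binom{p}{m}\ge(p/m)^m$ factor is exactly the first summand; the $(\frac{k+1}{2})^{rm}$ factor is the second. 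Your plan fixes the $m$ cut-components in advance and tries to recover the $(p/m)^m$ freedom from extra parallel paths inside each gadget, but as argued above that either blows up the cut count or the number of cut-components. The fix is simple once seen: build many more gadgets than $m$ and let the coarse degree of freedom be \emph{which} gadgets are active, not how many paths each one has.
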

{\bf Remark: }Notice that $\log \left| \mathcal{F} \right|$ is a lower bound on the query complexity of any algorithm that endeavors to learn $c$ cuts in this graph that are decomposed into $m$ components, each of which is $\kappa-$clustered, \emph{even if} the algorithm knows the values $m, c,$ and $\kappa$. 

Now, this result tells us that if we assume, for the sake of simplicity, that $m$ evenly divides $c$, $\kappa$ is odd, and $c(\kappa + 1)$ evenly divides $2nm$ (notice that $c(\kappa+2)\leq n$ by assumption)  then, we have that 
\begin{align*}
\log \left|\mathcal{F}\right| &\geq m \log\left(\frac{1}{m} \left\lfloor \frac{2nm}{c(\kappa+1)} \right\rfloor\times \frac{\kappa+1}{2}\right) + \left(c - m\right)\log \left(\frac{\kappa+1}{2}\right)\\
& = m \log\left(\frac{n}{c}\right) + (c-m)\log \left(\frac{\kappa+1}{2}\right).
\end{align*}
Comparing this with Theorem~1 in the manuscript, we see that \ssq is indeed parameter optimal. 
\begin{proof}
First, define 
\begin{align*}
r &\triangleq \left\lfloor \frac{c}{m}\right\rfloor,\\
k &\triangleq 2\left\lfloor\frac{\kappa-1}{2}\right\rfloor + 1,\\
p &\triangleq \left\lfloor\frac{2n}{r \left(k-1\right)+4}\right\rfloor. 
\end{align*}
If $(n,c,m,\kappa)\in \mathcal{P}$, it can be shown that $r \geq 1$ and $p \geq m$. 
Let $G(r,k)$ denote the following graph on $\frac{r(k-1)+4}{2}$ vertices -- two vertices are connected by $r$ edge disjoint paths and each path has $\frac{k-1}{2}$ vertices (and $\frac{k+1}{2}$ edges). This is shown in Fig~4 for $r = 4$ and $k = 9$. $G$ is constructed by linking $p$ copies of $G(r,\kappa)$ and a ``remainder'' graph $G_{\rm rem}$ which is a clique with $n - p\left(r \left(\frac{k-1}{2}\right) + 2\right)$ as shown in Fig~5. We will  denote these $p$ copies of $G(r,k)$ as $G_1,\ldots, G_p$.

Let $\mathcal{F}$ be the set of all labelings obtained as follows:
\begin{enumerate}
\item Choose $m$ out of the $p$ subgraphs $G_1,\ldots,G_p$ without replacement. There are ${p \choose m}$ ways to do this. 
\item In each of these $m$ chosen subgraphs, pick $r$ edges to be cut edges, one on each of the $r$ paths. There are $\left(\frac{k+1}{2}\right)^r$ ways to do this in one subgraph. Therefore, there is a total number of $\left(\frac{k+1}{2}\right)^{mr}$ ways to do this.
\item Now, let the left most vertex of $G_1$ be labeled $+1$, the rest of the labels are completely determined by the cut-set we just chose. 
\end{enumerate}
Notice that for each $f\in \mathcal{F}$, the following holds: (a) there are exactly $m$ cut-components in the graph, (b) the number of cuts is $m\times \left\lfloor \frac{c}{m}\right\rfloor \leq c$, and (c) in each cut component, the cuts are $k\leq \kappa$ close. 

The total number of such labelings is given by: $${p\choose m} \times \left(\frac{k+1}{2}\right)^{mr} \geq \left(\frac{p}{m}\right)^m\left(\frac{k+1}{2}\right)^{rm}.$$
Therefore, we can lower bound $\log \left|\mathcal{F}\right|$ as follows
\begin{align*}
\log\left|\mathcal{F}\right| &\geq m \log \left(\frac{p}{m}\right) + mr \log \left(\frac{k+1}{2}\right)\\
& = m \log \left(\frac{1}{m}\left\lfloor \frac{2n}{r(k-1) + 4} \right\rfloor\right) + m \left\lfloor \frac{c}{m} \right\rfloor \log\left(\left\lfloor \frac{\kappa-1}{2}\right\rfloor + 1\right)\\
& = m \log \left(\frac{1}{m}\left\lfloor \frac{2n}{\left\lfloor \frac{c}{m} \right\rfloor 2(\left\lfloor \frac{\kappa - 1}{2} \right\rfloor) + 4} \right\rfloor\right) +  m \left\lfloor \frac{c}{m} \right\rfloor \log\left(\left\lfloor \frac{\kappa-1}{2}\right\rfloor + 1\right)\\
& = m \log \left(\frac{1}{m}\left\lfloor \frac{n}{\left\lfloor \frac{c}{m} \right\rfloor (\left\lfloor \frac{\kappa - 1}{2} \right\rfloor) + 2} \right\rfloor \left(\left\lfloor \frac{\kappa - 1}{2} \right\rfloor + 1\right)\right) \\&\qquad\qquad + \left(m \left\lfloor \frac{c}{m} \right\rfloor - m\right) \log\left(\left\lfloor \frac{\kappa-1}{2}\right\rfloor + 1\right) .
\end{align*}
This concludes the proof.
\end{proof}

\subsection{Two Dimensional Grid}
\label{sec.2dGrid}
In this section, we will show that in the case of the 2-dimensional grid \ssq is near optimal even if we fix the graph before hand. Notice that in this sense, this result is stronger than the one presented in the previous section and is particularly relevant to the theory of nonparametric active learning. Consider the example of a 2-dimensional $r\times r$ grid, where the bottom-left vertex is labeled $+1$ and the top-right vertex is labeled $-1$ (see Fig.~\ref{fig.tightness2}). We want to count/lower bound the total number of labelings such that there are exactly 2 connected components and such that the cut-size of the labeling is no more than $r$. Notice that the logarithm of this number will be a lower bound on the query complexity of any algorithm that endeavors to learn a cut set of size no more than $r$. 
\begin{theorem}
The number of ways to partition an $r\times r$ grid into 2 components using a cut of size at most $r$ such that the bottom-left vertex and top-right vertex are separated is lower bounded by $2^r$. 
\end{theorem}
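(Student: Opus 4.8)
The plan is to exhibit three explicit families of valid partitions, count them, and verify that they are pairwise disjoint. Label the vertices $(i,j)$ with $0\le i,j\le r-1$, so that the two vertices to be separated are $(0,0)$ and $(r-1,r-1)$, and assume $r\ge 2$ (the statement is vacuous for $r=1$).

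\emph{First family: bottom-left staircase blobs.} For integers $w,h\ge 1$ with $w+h\le r$ and any weakly decreasing sequence $h=c_0\ge c_1\ge\cdots\ge c_{w-1}\ge 1$, let $B$ be the region whose $i$-th column is $\{(i,0),\dots,(i,c_i-1)\}$ for $0\le i\le w-1$ and which is empty in columns $i\ge w$. I would check three things. (a) $B$ is connected (its columns are bottom-anchored intervals, all meeting row $0$) and contains $(0,0)$; and $V\setminus B$ is connected (its columns are the top-anchored intervals $\{c_i,\dots,r-1\}$, which are nested as $i$ grows and hence all meet the top row) and contains $(r-1,r-1)$, since $B\subseteq\{0,\dots,w-1\}\times\{0,\dots,h-1\}$ with $w,h\le r-1$. (b) The cut size of this partition is exactly $w+h\le r$: there is one vertical cut edge in each of columns $0,\dots,w-1$, and the number of horizontal cut edges telescopes to $\sum_i (c_i-c_{i+1})=c_0=h$ (with the convention $c_i:=0$ for $i\ge w$). (c) Distinct sequences give distinct partitions, and for fixed $(w,h)$ there are $\binom{w+h-2}{w-1}$ admissible sequences, so this family contains $\sum_{k=2}^{r}\sum_{w=1}^{k-1}\binom{k-2}{w-1}=\sum_{k=2}^{r}2^{k-2}=2^{r-1}-1$ partitions. \emph{Second family:} the image of the first under the central symmetry $(i,j)\mapsto(r-1-i,r-1-j)$, which yields another $2^{r-1}-1$ partitions. \emph{Third family:} the two axis-parallel line cuts isolating the bottom row and the leftmost column, each a valid partition with cut size exactly $r$.

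It then remains to show the three families are pairwise disjoint, after which the counts add to $(2^{r-1}-1)+(2^{r-1}-1)+2=2^{r}$. The crucial point is a cardinality bound: a first-family blob lies inside a $w\times h$ box with $w+h\le r$, so the part of the partition containing $(0,0)$ has at most $\lfloor r^2/4\rfloor$ vertices, whereas in the second family that part is the complement of such a blob and so has at least $r^2-\lfloor r^2/4\rfloor>\lfloor r^2/4\rfloor$ vertices; matching up the parts containing $(0,0)$ then gives a contradiction. The same circle of ideas separates the line cuts from the blob families—the $(0,0)$-part of a line cut has exactly $r$ vertices and meets column $r-1$ or row $r-1$, while a bottom-left blob's $(0,0)$-part is confined to a proper corner box and a top-right blob's $(0,0)$-part has more than $r$ vertices—and the two line cuts are obviously distinct. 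I expect the only genuine work to be the routine local edge count in step (b) and this disjointness bookkeeping; the fiddliest point is ruling out coincidences between the bottom-left and top-right blob families, which the cardinality bound handles cleanly.
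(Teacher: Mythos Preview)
Your proof is correct. Both you and the paper lower-bound the count by exhibiting the same class of objects---monotone ``staircase'' partitions---but you parametrize them differently. The paper views each cut as a \emph{positive} (right/down only) walk on the dual $(r-1)\times(r-1)$ grid of boxes, counts such walks for cut size exactly $r$, and then sums over cut sizes $2,\dots,r$. You instead encode the bottom-left region directly by its weakly decreasing column-height profile $(c_0,\dots,c_{w-1})$, use the central symmetry to get the top-right analogues, and throw in the two axis line cuts, landing on exactly $2^r$. Your encoding makes both the cut-size computation (the telescoping to $w+h$) and the within-family enumeration (the $\binom{w+h-2}{w-1}$ identity, then Vandermonde/row-sum) especially transparent, and your cardinality argument for the disjointness of the two symmetric families is explicit where the paper leaves this implicit in the walk picture. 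Conversely, the paper's walk formulation naturally captures a few extra partitions you do not include (for instance, ``bottom two rows'' when $r\ge 3$), which is why its final count exceeds $2^r$; that hints at how one might sharpen the bound, at the cost of messier bookkeeping in the summation over cut sizes.
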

\begin{figure}[h!]
\centering
  \includegraphics[scale = 0.75]{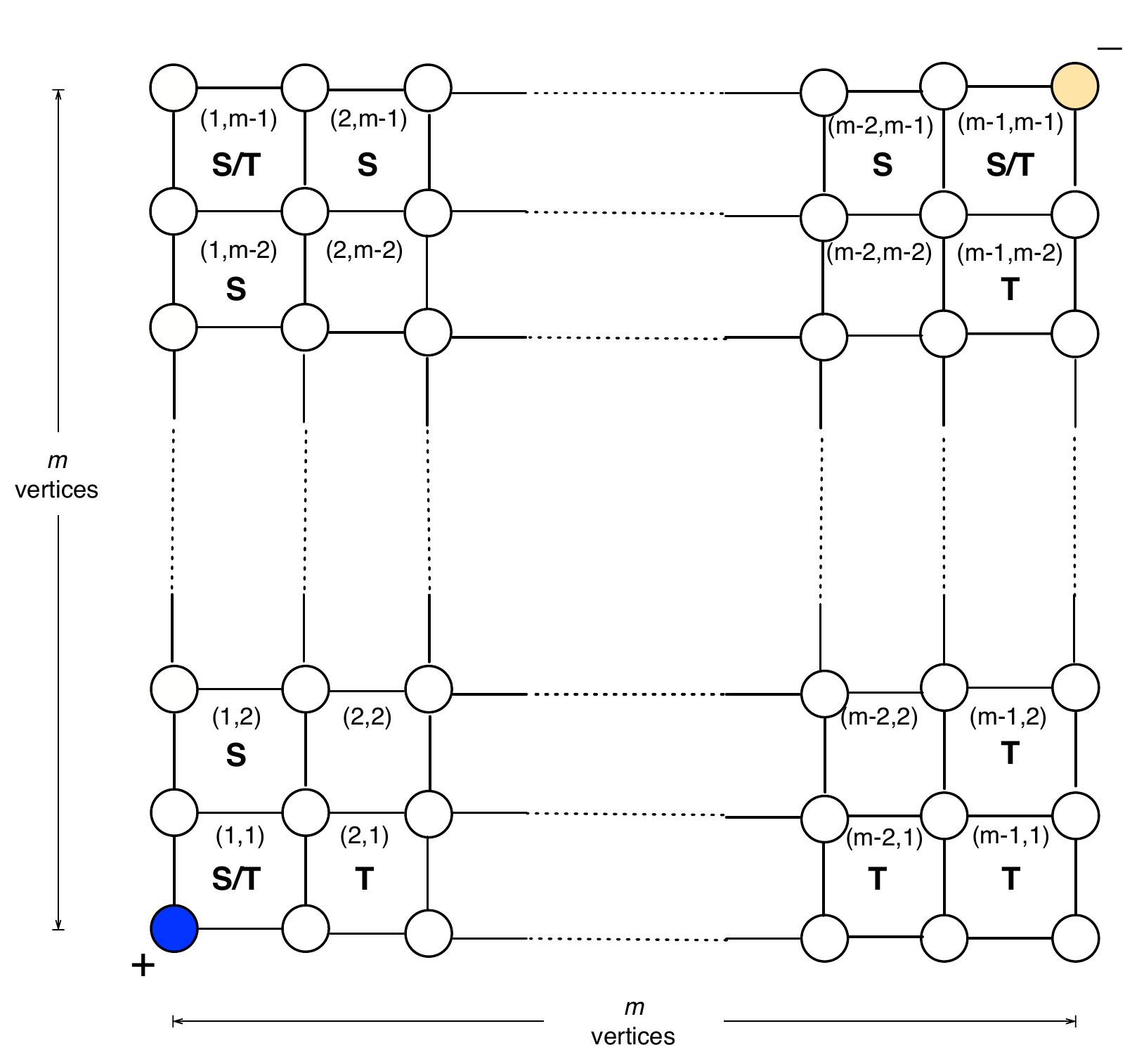}
  \caption{$r\times r$ grid with bottom-left and top-right vertex  oppositely labeled.}
  \label{fig.tightness3}
\end{figure}
\begin{proof}
We will first restrict our attention to cuts of size exactly $r$. Consider the grid shown in Figure~\ref{fig.tightness3}. We will begin by making a few simple observations. Each of the $(r-1)\times (r-1)$ boxes that contains a cut, has to contain at least 2 cuts. Furthermore, since these cuts are only allowed to partition the grid into 2 connected components, cuts are located in contiguous boxes. Therefore, there are at most $r-1$ boxes that contain cuts. 

We will think of a cut as a \emph{walk} on the grid of $(r-1)\times (m-1)$ boxes (labeled as $(i,j), 1\leq i,j\leq r-1$ in Fig~\ref{fig.tightness3}) and lower bound the total number of such walks. Observe that for a walk to correspond to a valid cut, it must contain one of the boxes labeled $S$ and one of the boxes labeled $T$. By symmetry, it suffices only consider walks that originate in an $S$ box and end in a $T$ box. 

To lower bound the number of valid walks, we are going to restrict ourselves to \emph{positive walks} -- walks that only move either right (R) or down (D). Notice that such walks traverse exactly $r-1$ boxes. Towards this end, we first observe that there are $2(r-1)$ such walks each of which originates in an $S$-block and terminates at the diametrically opposite $T$-block. These walks are made up of entirely R moves or entirely of D moves. Therefore, to count the number of remaining walks, we can restrict our attention to walks that contain at least one R move and one D move. Notice that such walks cannot cross over the diagonal. Therefore, by symmetry, it suffices to consider walks that start in an $S$-box on the left column: $\left\{(1,1),\ldots,(1,r-1)\right\}$ and end in a $T$-box in the bottom row: $\left\{(1,1),(2,1),\ldots,(r-1,1)\right\}$. Suppose, for $j\geq 2$, we start a walk at block $(1,j)$, then the walk has to make exactly $j-1$ down moves and $(m-2 - j+1)$ right moves (since the total number of blocks in the walk is $r-1$). Therefore, the total number of such positive walks that originate from $(1,j)$ is ${m-2 \choose j-1}$. Reasoning similarly, we conclude that there are $\sum_{j=2}^{r-3}{r-2\choose j-1}$ such positive walks from one of the $S$-boxes in the left column to one of the $T$-boxes in the bottom row.  Finally, observe that the walk that starts at $(1,1)$ and ends at $(1,m-1)$ correspond to two different cuts since the $(1,r-1)$ box has two valid edges that can be cut. Similarly the walk $(1,r-1)-\cdots-(r-1,r-1)$ corresponds to 2 valid cuts. Therefore, the total number of cuts from such walks is given by $2(r-1) + 2(2 + \sum_{j=2}^{r-3}{r-2\choose j-1}) = 2(r-1) + 2^{r-1}$, where the multiplication by $2$ inside follows from the symmetry we used. 

Observe now that if we allow the cuts to be smaller than $r$, then the total number of cuts is given by summing over the cut-sizes as follows: $\sum_{i=2}^r 2(i-1) + 2^{i-1} = {r-1 \choose 2} + 2^r - 2 \geq 2^r$. This concludes the proof.  
\end{proof}
Therefore, any algorithm will need to submit at least $\log(2^r) = \mathcal{O}(r)$ queries before it can discover a cut of size at most $r$ and in fact, from the proof above, this seems like a pretty weak lower bound (since we restricted ourselves only to positive walks). However, observe that since $\kappa = 3$ and $\left| \partial C \right| \leq r$ here, Theorem~1 (from the manuscript) tells us that \ssq submits no more than $\mathcal{O}\left( r  \right)$ queries for the same. Extending this argument to other families of graphs is an interesting avenue for future work. 

\end{document}